\newif\ifisarxiv
\newif\ifisneurips
\def\Vol{{\mathrm{R\textnormal{-}DPP}}}
\def\DPP{{\mathrm{DPP}}}
\def\cmark{\Green{\checkmark}}
\def\xmark{\Red{\large\sffamily x}}
\def\poly{{\mathrm{poly}}}
\def\polylog{{\mathrm{polylog}}}
\def\simiid{\overset{\textnormal{\fontsize{6}{6}\selectfont
i.i.d.}}{\sim}}
\def\tinydots{\textnormal{\fontsize{6}{6}\selectfont \dots}}
\def\tinycdots{\textnormal{\fontsize{6}{6}\selectfont $\cdots$}}
\def\checkmark{\tikz\fill[scale=0.4](0,.35) -- (.25,0) --
(1,.7) -- (.25,.15) -- cycle;} 
\newcommand{\deff}{d_{\textnormal{eff}}}
\newcommand{\transp}{\top}
\newcommand{\normsmall}[1]{\Vert #1 \Vert}
\newcommand{\CommaBin}{\mathbin{\raisebox{0.5ex}{,}}}
\def\sigmat{\widetilde{\sigma}}
\newcommand{\cO}{\mathcal{O}}
\newcommand{\tcO}{\widetilde{\cO}}
\def\Lb{\mathbf{L}}
\def\L{\mathbf{L}}
\def\Lbh{\widehat{\mathbf{L}}}
\def\Lbt{\widetilde{\mathbf{L}}}
\def\Lt{\widetilde{L}}
\def\Ic{\mathcal{I}}
\def\R{\mathbf R}
\newcommand{\BlackBox}{\rule{1.5ex}{1.5ex}}  % end of proof
\DeclareMathOperator*{\argmax}{\mathop{\mathrm{argmax}}}
\def\b{\mathbf b}
\def\bt{\widetilde{\mathbf b}}
\def\e{\mathbf e}
\def\one{\mathbf 1}
   \def\st{z}
   \def\sh{s}
   \def\st{\tilde{s}}
   \def\sh{\hat{s}}
\def\ee{\mathrm{e}}
\def\X{\mathbf X}
\def\B{\mathbf B}
\def\Bt{\widetilde{\mathbf B}}
\def\St{\widetilde{S}}
\def\I{\mathbf I}
\def\P{\mathbf P}
\def\E{\mathbb E}
\def\R{\mathbb R} 
\def\tr{\mathrm{tr}}
\def\rank{\mathrm{rank}}
\let\origtop\top
\renewcommand\top{{\scriptscriptstyle{\origtop}}} % this makes transpose not so big
\definecolor{silver}{cmyk}{0,0,0,0.3}
\definecolor{yellow}{cmyk}{0,0,0.9,0.0}
\definecolor{reddishyellow}{cmyk}{0,0.22,1.0,0.0}
\definecolor{black}{cmyk}{0,0,0.0,1.0}
\definecolor{darkYellow}{cmyk}{0.2,0.4,1.0,0}
\definecolor{darkSilver}{cmyk}{0,0,0,0.1}
\definecolor{grey}{cmyk}{0,0,0,0.5}
\definecolor{darkgreen}{cmyk}{0.6,0,0.8,0}
\newcommand{\Red}[1]{{\color{red}  {#1}}}
\newcommand{\Green}[1]{{\color{darkgreen}  {#1}}}
\newcommand{\Blue}[1]{\color{blue}{#1}\color{black}}
\newcommand{\Brown}[1]{{\color{brown}{#1}\color{black}}}
\newenvironment{proofof}[2]{\par\vspace{2mm}\noindent\textbf{Proof of {#1} {#2}}\ }{\hfill\BlackBox\\[2mm]}
\newenvironment{proof}{\par\noindent{\bf Proof\ }}{\hfill\BlackBox\\[2mm]}
\newtheorem{theorem}{Theorem}
\newtheorem{example}{Example}
\newtheorem{lemma}{Lemma}
\newtheorem{proposition}{Proposition}
\newtheorem{remark}{Remark}
\newtheorem{corollary}{Corollary}
\newtheorem{definition}{Definition}
\newtheorem{conjecture}[theorem]{Conjecture}
\newtheorem{claim}[theorem]{Claim}
\def\Bb{\overline{\B}}
\def\Csize{m}
\def\ie{i.e.,\xspace}
\def\eg{e.g.,\xspace}
\def\whp{w.h.p.,\xspace}
\def\nystrom{Nystr\"om\xspace}
\def\dppvfx{\textsc{DPP-VFX}\xspace}
\newcommand{\sizeS}[1]{|S_{#1}|}
\ifisneurips\newcommand{\mode}{M}\fi
\title{Exact sampling of determinantal point processes\\
  with sublinear time preprocessing}
\ifisarxiv\date{}\def\And{\and}\fi
\author{
      Micha{\l } Derezi\'{n}ski\thanks{Equal contribution.}\\
Department of Statistics\\
University of California, Berkeley\\
\texttt{mderezin@berkeley.edu}
\And
Daniele Calandriello\footnotemark[1]\\
LCSL\\
Istituto Italiano di Tecnologia, Italy\\
\texttt{daniele.calandriello@iit.it}
\And
Michal Valko\\
DeepMind Paris\\
%Paris, France\\
\texttt{valkom@google.com}
}
\begin{document}

\maketitle

\begin{abstract}
We study the complexity of sampling from a distribution over all index
subsets of the set $\{1,...,n\}$ with the probability of a subset $S$
proportional to the determinant of the submatrix $\L_S$ of some $n\times
n$ p.s.d.\,matrix $\L$, where $\L_S$ corresponds to the entries of $\L$ indexed by 
$S$. Known as a determinantal point process, this distribution is 
used in machine learning to induce diversity in subset selection. In
practice, we often wish to sample \emph{multiple} subsets $S$ with small
expected size $k \triangleq \E[|S|] \ll n$ from a very large matrix $\L$, so it is
important to minimize the preprocessing cost of the procedure
(performed once) as well as the sampling cost (performed
repeatedly). For this purpose, we propose a new algorithm which, given
access to $\L$, samples \emph{exactly} from a determinantal point process while
satisfying the following two properties: (1) its preprocessing cost is
$n \cdot \poly(k)$, i.e., \emph{sublinear} in the size of~$\L$, and (2) its sampling cost is
$\poly(k)$, i.e., \emph{independent} of the size of~$\L$. Prior to our results,  
state-of-the-art exact samplers required $\cO(n^3)$ preprocessing time
and sampling time linear in $n$ or dependent on the spectral
properties of $\L$. We also give a
reduction which allows using our algorithm for exact sampling
from cardinality constrained determinantal point processes with
$n\cdot\poly(k)$ time preprocessing.
\end{abstract}

% !TEX root = neurips.tex

\section{Introduction}
Given a positive semi-definite (psd) $n\times n$ matrix $\L$,
a determinantal point process $\DPP(\L)$ (also known as an $L$-ensemble) is a distribution over all
$2^n$ index subsets $S\subseteq\{1,\dots, n\}$ such that
\begin{align*}
  \Pr(S) \triangleq \frac{\det(\L_S)}{\det(\I+\L)}\CommaBin
\end{align*}
where $\L_S$ denotes the $|S|\times |S|$ submatrix of $\L$ with rows
and columns indexed by $S$.
Determinantal point processes naturally appear across many scientific
domains \cite{dpp-physics,dpp-stats,spanning-trees}, while also 
being used as a tool in machine learning and recommender systems
\cite{dpp-ml} for inducing diversity in subset selection and as a
variance reduction approach. In this context, we often wish to
efficiently produce many DPP samples of small expected size\footnote{
To avoid complicating the exposition with edge cases, we assume $k \geq 1$. Note that this
can be always satisfied without distorting the distribution by rescaling $\Lb$
by a constant, and is without
loss of generality, as our analysis can be trivially extended to the case $0 \leq k < 1$ with
some additional notation.
}
$k\triangleq\E[|S|]$ given a large matrix~$\L$. Sometimes the
distribution is restricted to subsets of fixed size $|S|=k\ll n$, denoted $k$-$\DPP(\L)$.
\cite{dpp-independence} gave an
algorithm for drawing  samples from $\DPP(\L)$ distributed exactly, later
adapted to $k$-$\DPP(\L)$ by \cite{k-dpp}, which can
be implemented to run in polynomial time. In many applications,
however, sampling is  still a computational bottleneck because the algorithm
requires performing the eigendecomposition of 
matrix $\L$ at the cost of $\cO(n^3)$. In addition to that initial cost, producing
many independent samples $S_1,S_2,\dots$ at high frequency poses a challenge because
the cost of each sample is at least linear in $n$. Many 
alternative algorithms have been considered for both DPPs
and $k$-DPPs to reduce the computational cost of preprocessing
and/or sampling, including many approximate and heuristic 
approaches. Contrary to approximate solutions, we present an algorithm which samples 
\emph{exactly} from a DPP or a $k$-DPP with the
initial preprocessing cost \emph{sublinear} in the size of
$\L$ and the sampling cost \emph{independent} of the size of $\L$. 
\begin{theorem}\label{t:main}
For a psd $n\times n$ matrix $\L$, let $S_1,S_2,\dots$ be i.i.d.\,random
sets from $\DPP(\L)$ or from any $k$-$\DPP(\L)$.
%either some $k$-$\DPP(\L)$ or $\DPP(\L)$ with $k=\big\lceil\E[|S_1|]\big\rceil$.
Then, there is an algorithm which, given access to $\L$, returns
\begin{enumerate}[label=\alph*)]
\item the first subset, $S_1$, in:\quad $n\cdot\poly(k)\,\polylog(n)$ time,
\item each subsequent $S_i$ in:\hspace{6.4mm}  $\poly(k)$ time.
  \end{enumerate}
\end{theorem}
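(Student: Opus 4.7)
The plan is to proceed in two phases. The preprocessing phase (contributing the $n\cdot\poly(k)\,\polylog(n)$ cost) builds a small ``dictionary'' that summarizes the spectrally relevant part of $\L$; the per-sample phase then produces exact DPP samples by a rejection scheme that never again touches the full matrix.

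For preprocessing, my strategy is to compute approximate $\lambda$-ridge leverage scores (RLS) of the rows of $\L$ for a regularization $\lambda$ chosen so that the resulting effective dimension $\deff(\lambda)$ is $\Theta(k)$. Using an iterative refinement scheme along the lines of BLESS, these scores can be computed to constant multiplicative accuracy in $n\cdot\poly(k)\,\polylog(n)$ time, accessing only $\poly(k)$ rows and columns of $\L$ in each refinement round. Sampling proportional to the RLS produces a subset $C$ of size $\Csize=\poly(k)$ such that the associated \nystrom approximation $\Lbt$ is a good spectral surrogate for $\L$. All subsequent samples can be drawn using only $\L_{C,\cdot}$, $\L_{C,C}$, and the RLS estimates, which together occupy $n\cdot\poly(k)$ space and are precomputed once.

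For sampling, the approach is rejection sampling from a tractable proposal to $\DPP(\L)$, in the spirit of \dppvfx. Conditional on $C$, the proposal first draws a small candidate multiset $\sigma$ of size $\poly(k)$ from a product distribution over $C$ (each index with probability related to its estimated RLS), then performs an exact DPP step on the corresponding $\poly(k)\times\poly(k)$ \nystrom submatrix to produce a candidate $S$. Using a Cauchy--Binet style identity, one shows that the proposal density factors into $\det(\L_S)$ times a quantity that depends on $\sigma$ but not on $S$, so the acceptance check reduces to comparing an easily computed determinantal ratio. With $\lambda$ and $\Csize$ tuned correctly, the acceptance probability is $\Omega(1)$, and each exact sample is produced after $O(1)$ proposals in expectation, each of cost $\poly(k)$ because all linear algebra happens in the $\poly(k)$-dimensional \nystrom space.

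For the $k$-DPP reduction, I would use the fact that $\DPP(\alpha\L)$ conditioned on $|S|=k$ coincides with $k$-$\DPP(\L)$ for every $\alpha>0$. Choosing $\alpha$ via a one-dimensional search (the expected size of $\DPP(\alpha\L)$ is strictly monotone in $\alpha$) so that $\E[|S|]=k$, the strong Rayleigh property of DPPs yields anti-concentration of $|S|$ about its mean and hence $\Pr(|S|=k)=\Omega(1/\sqrt{k})$. Accept-reject on the cardinality therefore adds only a $\poly(k)$ factor, fitting within the stated budget for both the preprocessing and per-sample phases.

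The hard part of the argument is controlling the rejection ratio for the \nystrom-based proposal: one must show that the ratio between the proposal density and $\DPP(\L)$ is bounded by an $O(1)$ constant (at least in expectation over the proposal), which requires carefully relating $\det(\L_S)$ to $\det(\Lbt_S)$ plus a correction and then bounding that correction via the $\lambda$-RLS guarantee on the spectral gap between $\L$ and $\Lbt$. A secondary difficulty is that \emph{sublinear} preprocessing forbids even touching all $n^2$ entries of $\L$, so every quantity used during sampling (including the RLS estimates themselves) must be computed without ever materializing $\L$ in full, which is what forces the iterative BLESS-style refinement for $C$ rather than a one-shot RLS computation.
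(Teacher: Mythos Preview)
Your overall architecture---BLESS-style RLS estimation, a \nystrom dictionary $C$, intermediate downsampling to a multiset $\sigma$, a small exact DPP, and rejection---matches the paper, and your $k$-DPP reduction via $\alpha$-rescaling plus strong-Rayleigh anti-concentration is essentially the paper's Theorem~\ref{lem:k-dpp-repeat}. But there is a genuine gap in how you describe the per-sample phase.

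You say $\sigma$ is drawn ``from a product distribution over $C$,'' the DPP acts on ``the corresponding \nystrom submatrix,'' and subsequent samples use only $\L_{C,\cdot}$, $\L_{C,C}$, and the RLS estimates. This cannot yield exact $\DPP(\L)$: any proposal supported on $C$, or any DPP run on a submatrix of the \nystrom surrogate, outputs only subsets of $C$, whereas $\DPP(\L)$ puts positive mass on every singleton $\{i\}$ with $L_{ii}>0$; rejection can thin but never enlarge the proposal's support. In the paper's algorithm $\sigma$ is sampled i.i.d.\ from \emph{all of} $[n]$ with probabilities $l_i/\sh$, and the final DPP is performed on $\Lbt_\sigma$, where $\Lbt$ is an entrywise rescaling of the \emph{true} kernel $\L$ (not of $\Lbh$). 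The \nystrom approximation enters only through the weights $l_i$ and the acceptance-ratio denominator $\det(\I+\Lbh)$; exactness then follows from the R-DPP composition identity (Proposition~\ref{t:composition}), not from a pointwise $\det(\L_S)$-versus-surrogate comparison. A direct consequence is that each resample \emph{must} touch $\L$: it queries the $O(t^2)=\poly(k)$ fresh entries $L_{\sigma_i\sigma_j}$ to form $\Lbt_\sigma$, so the claim that the per-sample phase avoids $\L$ entirely is incompatible with exactness. Finally, the ``hard part'' you flag is real, but its resolution is not a $\det(\L_S)$-versus-$\det(\Lbh_S)$ bound; the paper instead bounds $\det(\I+\Lbt_\sigma)/\det(\I+\Lbh)$ by writing $\Lbh=\B\P\B^\top$ with $\P$ a projection (true for any \nystrom approximation) and applying $\det(\I+\A)\le \ee^{\tr\A}$, giving acceptance probability $\ge \ee^{-2}$ once $q\ge\max\{\sh^2,\sh\}$ and $\Lbh$ satisfies the trace condition of Theorem~\ref{thm:fast-sampling-result}.
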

We refer to this algorithm as the very fast and exact DPP
sampler, or \dppvfx.
 Table \ref{tab:comp} compares \dppvfx with other
DPP and $k$-DPP sampling algorithms. In this comparison, we focus on the methods
that provide strong accuracy guarantees. As seen from the
table, our algorithm is the first exact sampler to 
achieve sublinear overall runtime. Only the approximate MCMC sampler
of \cite{rayleigh-mcmc} matches our $n\cdot\poly(k)$ complexity
(and only for a $k$-DPP), but for this method every next
sample is equally expensive, making it less practical when repeated
sampling is needed. In fact, to our knowledge, no other exact or
approximate method (with rigorous approximation guarantees)
achieves $\poly(k)$ sampling time of this paper. 

\begin{table}
  \centering
\begin{tabular}{r||c|c|c|l|l}
 &exact&DPP&$k$-DPP&first sample&subsequent samples\\
  \hline\hline
  \cite{dpp-independence,k-dpp}&\cmark&\cmark&\cmark&$n^3$&$nk^2$\\
  \cite{rayleigh-mcmc}&\xmark&\xmark&\cmark&$n\cdot\poly(k)$&$n\cdot\poly(k)$\\
  \cite{kdpp-mcmc}&\xmark&\cmark&\xmark&$n^2\cdot\poly(k)$
&$n^2\cdot\poly(k)$\\
  \cite{dpp-noeig}&\cmark&\cmark&\xmark&$n^3$&$\poly(k\cdot(1+\|\L\|) )$\\
  \cite{dpp-intermediate}&\cmark&\cmark&\xmark&$n^3$&$\poly(\rank(\L))$\\[1mm]
  \hline
\dppvfx \textbf{(this paper)}&\cmark&\cmark&\cmark&$n\cdot \poly(k)$&$\poly(k)$
\end{tabular}
\vspace{2mm}
\caption{Comparison of DPP and k-DPP algorithms using the L-ensemble
  representation. For a DPP, $k$ denotes the expected subset
  size. Note that $k\leq \rank(\L)\leq n$. We omit log terms for clarity.}
\label{tab:comp}
\vspace{-5mm}
\end{table}

Our method is based on a
technique developed recently by
\cite{leveraged-volume-sampling,correcting-bias} and later extended
by \cite{dpp-intermediate}. In this approach, we
carefully downsample the index set $[n]=\{1,...,n\}$ to a sample
$\sigma=(\sigma_1,...,\sigma_t)\in[n]^t$ that is small but still sufficiently
larger than the expected target size $k$, and then run a DPP on
$\sigma$. As the downsampling distribution we use a \emph{regularized}
determinantal point process (R-DPP), proposed by
\cite{dpp-intermediate}, which (informally) samples $\sigma$
with probability $\Pr(\sigma)\sim \det(\I+\Lbt_{\sigma})$, where
$\Lbt$ is a rescaled version of $\L$.
Overall, the approach is described in the diagram below, where
$|S|\leq t\ll n$,
\begin{align*}
\{1,...,n\}\ \ \xrightarrow{\sigma\sim \text{R-DPP}}\ \ (\sigma_1,...,\sigma_t)
\ \ \xrightarrow{\St\sim\text{DPP}}\ \ S =\{\sigma_i:i\in \St\}.
\end{align*}
The DPP algorithm proposed by \cite{dpp-intermediate} follows the
same diagram, however it requires that the size of the intermediate
sample $\sigma$ be $\Omega(\rank(\L)\cdot k)$. This means that their method
provides improvement over \cite{dpp-independence} only when $\L$ can be decomposed as $\X\X^\top$
for some $n\times r$ matrix $\X$, with $r\ll n$. However, in practice,
matrix $\L$ is often only \emph{approximately low-rank}, i.e., it exhibits some form
of eigenvalue decay but it does not have a low-rank factorization. In
this case, the results of \cite{dpp-intermediate} are vacuous both in
terms of the preprocessing cost and the sampling cost, in that
obtaining \emph{every} sample would take $\Omega(n^3)$. We propose a different
R-DPP implementation (see \dppvfx as  Algorithm \ref{alg:main}) where the expected
size of $\sigma$ is $\cO(k^2)$. To make the algorithm efficient, we use
new connections between determinantal point processes, ridge leverage
scores, and Nystr\"om approximation.
\begin{definition}\label{d:rls}
  Given a psd matrix $\L$, its $i$th $\lambda$-ridge leverage score (RLS)
  $\tau_i(\lambda)$ is the $i$th diagonal entry of
  $\L(\lambda\I+\L)^{-1}$. The $\lambda$-effective dimension $\deff(\lambda)$
  is the sum of the leverage scores, 
$\sum_{i}\tau_i(\lambda)$.
\end{definition}
An important connection between RLSs and DPPs is that when $S\sim \DPP(\L),$ the marginal probability of index $i$
being sampled into $S$ is equal to the $i$th $1$-ridge leverage score
of $\L$, and the expected size $k$ of $S$ is equal to the $1$-effective dimension:
\begin{align*}
  \Pr(i\in S) = \big[\L(\I+\L)^{-1}\big]_{ii} = \tau_i(1),\qquad
  k \triangleq \E\big[|S|\big] = \tr\big(\L(\I+\L)^{-1}\big) = \deff(1).
\end{align*}
Intuitively, if the marginal probability of $i$ is high, then this
index should  likely make it into the intermediate sample
$\sigma$. This suggests that i.i.d.\,sampling of the indices $\sigma_1,...,\sigma_t$ proportionally to
1-ridge leverage scores, i.e.~$\Pr(\sigma_1=i)\propto \tau_i(1)$, should serve as a reasonable and cheap
heuristic for constructing $\sigma$. In fact, we can show that this
distribution can be easily corrected by rejection sampling to
become the R-DPP that we need. Computing
ridge leverage scores exactly costs $\cO(n^3)$, so instead we compute
them approximately by first constructing a Nystr\"om approximation of $\L$. 
\begin{definition}%[Nystr\"om approximation]
  Let $\L$ be a psd matrix and $C$ a subset of its row/column indices
  with size $\Csize  \triangleq |C|$.
Then we define the \nystrom approximation of $\L$ based on $C$ as the $n\times n$
  matrix $\Lbh \triangleq (\L_{C,\Ic})^\top\L_C^+\L_{C,\Ic}$. 
\end{definition}
Here, $\L_{C,\Ic}$ denotes an $\Csize \times n$ matrix consisting of (entire)
rows of $\L$ indexed by $C$ and $(\cdot)^+$ denotes the Moore-Penrose
pseudoinverse.
Since we use rejection sampling to achieve the
right intermediate distribution, the correctness of our algorithm does not depend on
which \nystrom approximation is chosen. However, the subset $C$ greatly influences
the computational cost of the sampling through the rank of $\Lbh$ and the
probability of rejecting a sample. Since $\rank(\Lbh) = |C|$, operations
such as multiplication and inversion involving the \nystrom approximation will
scale with $m$, and therefore a small subset increases efficiency.
However, if $\Lbh$ is too different from $\L$, the probability of rejecting the sample will be very high and the
algorithm inefficient. In this case, a slightly larger subset could
improve accuracy and acceptance rate without increasing too much
the cost of handling $\Lbh$.
Therefore, subset $C$ has to be selected so that it is both small and accurately
represents the matrix $\L$. Here, we once again rely on ridge
leverage score sampling which has been effectively used for obtaining good \nystrom
approximations  in a number of
prior works such as \cite{ridge-leverage-scores,calandriello_disqueak_2017,NIPS2018_7810}.

While our main algorithm can sample only from the \emph{random-size} 
DPP, and not from the \emph{fixed-size} $k$-DPP, we present a rigorous
reduction argument which lets us use our DPP algorithm to sample
exactly from a $k$-DPP (for any~$k$) with  a small computational overhead.

% \begin{theorem}\label{t:main}
% There is an algorithm which, given random access to an $n\times n$ psd matrix $\L$,
% returns:
% \begin{enumerate}[label=\alph*)]
% \item a random subset $S\sim \DPP(\L)$ in:\quad $n\cdot\poly(k)$ time,\quad where
%   $k=\E[|S|]$;
% \item each subsequent $S\sim\DPP(\L)$ in:\hspace{7mm}  $\poly(k)$ time.
%   \end{enumerate}
% \end{theorem}
% \begin{theorem}\label{t:k-dpp}
% Given an algorithm sampling $S_{\alpha}\sim\DPP(\alpha\L)$ for any $\alpha>0$ with time complexity
%   $C\big(\E[|S_{\alpha}|]\big)$ we can construct a k-DPP algorithm
%   with time complexity $O\big(\sqrt k\cdot C(k)\big)$.
% \end{theorem}

% !TEX root =neurips.tex
\paragraph{Related work}
\label{s:related}
Prior to our work, fast exact sampling from generic DPPs has been considered out of reach.
The first procedure to sample general DPPs was given by \cite{dpp-independence} 
and even most recent exact refinements
\cite{dpp-noeig,dpp-intermediate,poulson2019high-performance}, when
the DPP is represented in the form of an $L$-ensemble, require preprocessing that amounts to an expensive 
$n \times n$  matrix diagonalization at the cost $\cO(n^3)$, which is
shown as the \emph{first-sample} complexity column in
Table~\ref{tab:comp}. 

Nonetheless, there are well-known samplers for very specific DPPs
that are both fast and exact, for instance for sampling
uniform spanning trees \cite{Ald90,Bro89,PrWi98JoA}, 
which leaves the possibility of a more generic fast sampler open.
Since the sampling from DPPs
has several practical large scale machine learning applications
\cite{dpp-ml}, there are now
a number of methods known to be able to sample from a
DPP \emph{approximately}, outlined in the following paragraphs.

As DPPs can be specified by kernels ($L$-kernels or $K$-kernels), a natural 
approximation strategy is to resort to low-rank approximations \cite{k-dpp, dpp-salient-threads,dpp-nystrom,dpp-coreset}.
For example, \cite{dpp-nystrom}  provides approximate guarantee for the probability of any subset 
being sampled as a function of eigengaps of the $L$-kernel.
Next, \cite{dpp-coreset} construct \emph{coresets}
approximating a given $k$-DPP  
and then use them for sampling. In their Section~4.1, \cite{dpp-coreset} show in which cases we can 
hope for a good approximation. 
These guarantees become tight if these approximations (\nystrom subspace, coresets)
are aligned with data. In our work, we aim for an \emph{adaptive} approach that is able to provide a good
approximation for \emph{any} DPP. 
 %or a $k$-DPP.
	
The second class of approaches are based on Markov chain Monte-Carlo
\cite{metropolis1949monte}
techniques
\cite{dpp-clustering,ReKa15,rayleigh-mcmc,kdpp-mcmc,gautier2017zonotope}. There
are known  polynomial bounds on the mixing rates  \cite{diaconis1991geometric} of MCMC chains with arbitrary DPPs as their limiting measure. In
particular, \cite{rayleigh-mcmc} 
showed them for cardinality-constrained DPPs and \cite{kdpp-mcmc} for
the general case. The two chains have mixing times which are, respectively,
linear and quadratic in $n$ (see Table~\ref{tab:comp}).
% particular  
% that the chain mixes in at most cubic time w.r.t.\,the dimension of
% the DPP.
% \Red{This is not clear. What is the dimension? $n$ or $k$? Cubic time
% does not make sense either way.}
Unfortunately, for any subsequent sample we need to wait until the 
chain \emph{mixes again.}

Neither the known low-rank approximations or the known MCMC methods
are able to provide samples that are 
\emph{exactly} distributed (also called \emph{perfect 
sampling}) according to a DPP.
This is not surprising as having \emph{scalable and exact} sampling is very challenging in general.
% For example, MCMC samples are typically correlated and additional
% measures need to be taken to make the samples perfect~\citep{andrieu2003introduction}.
% Even though there are  MCMC methods that can perform perfect
% sampling~\citep{propp1998coupling,fill1998interruptible}, they have to assume certain restrictions,
% and  are not practically efficient~\citep{andrieu2003introduction}.
% On the other hand, low-rank approximation techniques often work by replacing a
% DPP kernel $\L$ with its approximation $\Lbh$, for example using the
% \nystrom method \cite{dpp-nystrom}, and then sampling from
% $\DPP(\Lbh)$, which inevitably leads to some approximation error.
For example, methods based on rejection 
sampling are \emph{always exact}, but they typically do not scale with the dimension 
and are adversely affected by the spikes in the distribution \cite{erraqabi2016pliable},
resulting in high rejection rate and inefficiency. 
% Sometimes this is addressed by approximating the rejection
% envelope \cite{erraqabi2016pliable}, which makes the rejection sampling no longer exact.
Surprisingly, our method is based on both low-rank approximation
(a source of inaccuracy) and rejection sampling (a common source of inefficiency).
In the following section, we show how to obtain a \emph{perfect DPP
sampler} from a \nystrom approximation of the $L$-kernel.
Then, to guarantee efficiency, in Section~\ref{s:fast} we bound the number of
rejections, which is possible thanks to the
use of intermediate downsampling.
% !TEX root = neurips.tex
\section{Exact sampling using any Nystr\"om approximation}
\label{s:exact}
\begin{wrapfigure}{r}{0.5\textwidth}
  \vspace{-8mm}
  \begin{minipage}{0.5\textwidth}
\begin{algorithm}[H]
  \caption{\ifisarxiv\small\dppvfx sampling $S\!\sim\!\DPP(\Lb)$
  \else\dppvfx~sampling $S\sim\DPP(\Lb)$\fi}
\label{alg:main}
\vspace{1mm}
{\footnotesize \textbf{input:}}\\[\ifisarxiv-7mm\else-8mm\fi]
  \begin{center}
\parbox{\ifisarxiv 0.85\else0.74\fi\textwidth}{
    \scriptsize
$\L\in\R^{n\times n}$, its Nystr\"om
approximation $\Lbh$, % with $\rank(\Lbh) = m$
\ $q>0$\\[1mm]
$l_i=\big[(\L - \Lbh) +\Lbh(\I+\Lbh)^{-1}\big]_{ii}\ \approx\
\Pr(i\in S)$,  \\
$\sh =  \sum_i l_i$, \quad $\st = \tr\big(\Lbh(\I+\Lbh)^{-1}\big)$,\\
$\Lbt = 
\frac\sh q\Big[\frac1{\sqrt{l_il_j}}\,L_{ij}\Big]_{ij}$
}
\end{center}
\begin{algorithmic}[1]
  \STATE \textbf{repeat}\label{line:rep1}
  \vspace{1mm}
  \STATE \quad sample $t \sim
  \mathrm{Poisson}(q\,\ee^{\sh/q})$
%  \quad where$q=\sh^2\alpha, \quad \alpha=\mathrm{e}^{\sfrac1\sh}$\label{line:poisson}
  \vspace{1mm}
  \STATE \quad sample $\sigma_1,\tinydots,\sigma_t\simiid (\frac{l_1}{\sh}\CommaBin\tinycdots\CommaBin\frac{l_n}{\sh})$,
%    \big(\,\x_i^\top(\I+\A)^{-1}\x_i\,\big)_{i=1}^n$
  \label{line:iid}
  \vspace{1mm}
   \STATE \quad sample $\textit{Acc}\sim\!
  \text{Bernoulli}\Big(\frac{\ee^{\st}\det(\I+\Lbt_\sigma)}  %\frac\st{q-\st}
{\ee^{t\sh/q}\det(\I+\Lbh)}\Big)$ \label{line:acc}
\STATE \textbf{until} $\textit{Acc}=\text{true}$\label{line:rep2}
\vspace{1mm}
\STATE sample $\St\sim \DPP\big(\Lbt_\sigma\big)$\label{line:sub}
  \RETURN $S = \{\sigma_i: i\!\in\! \St\}$
\end{algorithmic}
\end{algorithm}
\end{minipage}
\vspace{-4mm}
\end{wrapfigure}
\paragraph{Notation}
We use $[n]$ to denote the set $\{1,\dots,n\}$. For a matrix
$\B\in\R^{n\times m}$ and index sets $C$, $D$, we use $\B_{C,D}$
to denote the submatrix of $\B$ consisting of the intersection of rows
indexed by $C$ with columns indexed by $D$.  If $C=D$, we use a
shorthand $\B_{C}$ and if $D=[m]$, we may write $\B_{C,\Ic}$.
Finally, we also allow $C,D$ to be multisets (or
sequences), in which case each entry is rescaled by the multiplicities of its
row and column indices as follows: the entry at the intersection of
the $i$th row with the $j$th column
becomes $\sqrt{n_im_j}\cdot B_{ij}$, where $n_i$ is the multiplicity
of $i$ in $C$ and $m_j$ is the multiplicity of $j$ in $D$. Such a
rescaling has the property that if $\L=\B\B^\top$ then
$\L_{C,D}=\B_{C,\Ic}\B_{D,\Ic}^\top$.

As discussed in the introduction, our method relies on an intermediate
downsampling distribution to reduce the size of the problem. The
exactness of our sampler relies on the careful choice of that
intermediate distribution. To that end, we use regularized
determinantal processes, introduced by \cite{dpp-intermediate}. In the
below definition, we adapt them to the kernel setting.
\begin{definition}\label{d:r-dpp}
  Given an $n\times n$ psd matrix $\L$, distribution
  $p\triangleq(p_1,\dots,p_n)$ and $r>0$, let $\Lbt$ denote an $n\times n$
  matrix such that $\Lt_{ij} \triangleq \frac1{r\sqrt{p_ip_j}}L_{ij}$ for all
  $i,j\in[n]$. We define $\Vol_p^r(\L)$ as a
  distribution over events $A\subseteq\bigcup_{k=0}^\infty[n]^k$, where
  \begin{align*}
    \Pr(A) =\frac{\E_{\sigma}\big[\one_{[\sigma\in A]}\det(\I +
    \Lbt_\sigma)\big]}
    {\det(\I + \L)}\CommaBin\quad\text{for}
    \quad\sigma=(\sigma_1,\dots,\sigma_t)\simiid p,\quad
    t\sim\mathrm{Poisson}(r).    
  \end{align*}
\end{definition}
Since the term $\det(\I+\Lbt_\sigma)$ has the same form as
the normalization constant of $\DPP(\Lbt_\sigma)$, an easy
calculation shows that the R-DPP can be used as an intermediate
distribution in our algorithm without introducing any distortion in the sampling.
\begin{proposition}[{\citealp[Theorem 8]{dpp-intermediate}}]\label{t:composition}
  For any $\L$, $p$, $r$ and $\Lbt$ defined as in Definition
  \ref{d:r-dpp},
  \begin{align*}
    \text{if}\quad \sigma\sim\Vol_p^r(\L)\quad\text{and}\quad
    S\sim\DPP(\Lbt_\sigma)\quad\text{then}\quad
    \{\sigma_i:i\!\in\! S\}\sim\DPP(\L).
  \end{align*}
\end{proposition}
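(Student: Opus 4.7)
The plan is to compute $\Pr(T^\ast = T)$ directly, where $T^\ast \triangleq \{\sigma_i : i \in S\}$, and show it equals $\det(\L_T)/\det(\I+\L)$, which identifies the distribution as $\DPP(\L)$.

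First I would write out the joint density. By Definition \ref{d:r-dpp}, the point mass of the R-DPP at a sequence $\sigma_0 \in [n]^t$ factors through the base Poisson-iid law weighted by $\det(\I+\Lbt_{\sigma_0})/\det(\I+\L)$, so
\begin{align*}
\Pr(\sigma=\sigma_0,\,S=S_0)
= e^{-r}\frac{r^t}{t!}\prod_{i=1}^t p_{\sigma_{0,i}}
\cdot \frac{\det(\I+\Lbt_{\sigma_0})}{\det(\I+\L)}
\cdot \frac{\det(\Lbt_{\sigma_0,S_0})}{\det(\I+\Lbt_{\sigma_0})},
\end{align*}
where the middle and last fractions come from the R-DPP and the $\DPP(\Lbt_\sigma)$ step respectively. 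The key observation is that $\det(\I+\Lbt_{\sigma_0})$ cancels, leaving a clean expression linear in $\det(\Lbt_{\sigma_0,S_0})$.

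Next, fix a target set $T\subseteq[n]$ with $|T|=k$ and sum over all pairs $(\sigma_0,S_0)$ with $\{\sigma_{0,i}:i\in S_0\}=T$. If $\sigma_0$ restricted to $S_0$ has a repeat, then $\Lbt_{\sigma_0,S_0}$ has two equal rows and $\det(\Lbt_{\sigma_0,S_0})=0$; hence only the case $|S_0|=k$ with $\sigma_0|_{S_0}$ a bijection onto $T$ contributes. For such a bijection, the definition $\Lt_{ij}=\tfrac1{r\sqrt{p_ip_j}}L_{ij}$ and row/column-permutation invariance of the determinant give
\begin{align*}
\det(\Lbt_{\sigma_0,S_0}) = \frac{1}{r^k\prod_{j\in T}p_j}\det(\L_T).
\end{align*}
The remaining positions $[t]\setminus S_0$ can map anywhere in $[n]$, and $\sum_{\sigma\in[n]^{t-k}}\prod_i p_{\sigma_i}=(\sum_j p_j)^{t-k}=1$ since $p$ is a probability distribution.

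Combining the above, the sum over $t\ge k$, over the $\binom{t}{k}$ choices of $S_0$, and the $k!$ bijections from $S_0$ onto $T$ collapses: the $\prod_{j\in T}p_j$ from the bijection's iid weight cancels the $1/\prod_{j\in T}p_j$ from $\det(\Lbt_{\sigma_0,S_0})$, the factorials collapse $\binom{t}{k}k!/t!=1/(t-k)!$, and we obtain
\begin{align*}
\Pr(T^\ast=T)
= \frac{\det(\L_T)}{\det(\I+\L)}\sum_{t\ge k}\frac{e^{-r}r^{t-k}}{(t-k)!}
= \frac{\det(\L_T)}{\det(\I+\L)},
\end{align*}
since the series is the total mass of a Poisson$(r)$ law. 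The main bookkeeping obstacle will be justifying that sequences $\sigma_0$ with repeated entries contribute nothing at the step where we select $S_0$: this boils down to the observation that the DPP kernel $\Lbt_{\sigma_0,S_0}$ is indexed by positions, so a repeated coordinate in the selected positions forces two identical rows. Once that is settled, the rest is the combinatorial identity above, which works for arbitrary psd $\L$, arbitrary distribution $p$, and any $r>0$.
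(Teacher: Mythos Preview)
Your proof is correct. The paper does not give its own proof of this proposition: it is quoted verbatim as Theorem~8 of \cite{dpp-intermediate} and used as a black box, so there is no in-paper argument to compare against.

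For what it is worth, your direct computation is the natural one and likely close to what appears in \cite{dpp-intermediate}: write the joint law of $(\sigma,S)$, observe that the normalizer $\det(\I+\Lbt_\sigma)$ cancels between the R-DPP weight and the $\DPP(\Lbt_\sigma)$ denominator, then marginalize over the unselected positions using $\sum_j p_j=1$ and collapse the Poisson sum. The one subtlety you flag---that positions $S_0$ with a repeated $\sigma$-value contribute zero because $\Lbt_{\sigma_0}$ is indexed by \emph{positions} and hence gets two identical rows---is exactly the right observation, and it is what forces $|S_0|=|T|$ and makes the counting work. A minor point: your argument implicitly assumes $p_i>0$ for all $i$ (otherwise $\Lbt$ is undefined); this is harmless since indices with $p_i=0$ are never sampled into $\sigma$ and one can restrict to the support of $p$.
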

To sample from the R-DPP, \dppvfx uses rejection
sampling, where the proposal distribution is sampling i.i.d.\,proportionally to the approximate 1-ridge leverage scores $l_i\approx
\tau_i(1)$ (see Definition~\ref{d:rls} and the following discussion), computed using any
\nystrom approximation $\Lbh$ of matrix $\L$. Apart from $\Lbh$, the
algorithm also requires an additional parameter $q$, which controls
the size of the intermediate sample. Because of rejection sampling and
\Cref{t:composition}, the correctness of the algorithm \emph{does not
depend} on the choice of $\Lbh$ and $q$, as demonstrated in the following
result. The key part of the proof involves showing that the acceptance
probability in Line~\ref{line:acc} is bounded by 1. Here, we obtain 
a considerably tighter bound than the one achieved by
\cite{dpp-intermediate}, which allows us to use a much smaller intermediate
sample $\sigma$ (see Section \ref{s:fast}) while maintaning the efficiency of rejection sampling.
\begin{theorem}\label{t:exact}
Given a psd matrix $\L$, any one of its Nystr\"om approximations
$\Lbh$ and any positive $q$, \dppvfx
  returns $S\sim\DPP(\L)$.
\end{theorem}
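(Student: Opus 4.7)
The plan is to reduce the theorem to Proposition~\ref{t:composition}: since line~\ref{line:sub} applies $\DPP(\Lbt_\sigma)$ and the algorithm returns $\{\sigma_i:i\in \St\}$, it suffices to show that the sequence $\sigma$ emitted by the rejection loop on lines~\ref{line:rep1}--\ref{line:rep2} is distributed as $\Vol_p^q(\L)$ with $p_i = l_i/\sh$. A quick bookkeeping check confirms that the $\Lbt$ defined in the algorithm header matches Definition~\ref{d:r-dpp} under these choices, since $1/(q\sqrt{(l_i/\sh)(l_j/\sh)}) = \sh/(q\sqrt{l_il_j})$.

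Next I would analyze the loop as a textbook rejection sampler. The proposal $Q$ draws $t\sim\mathrm{Poisson}(q\ee^{\sh/q})$ and then $\sigma\simiid p^{\otimes t}$, so at a sequence of length $t$ its density is $\ee^{-q\ee^{\sh/q}}(q\ee^{\sh/q})^t/t!\cdot\prod_i p_{\sigma_i}$, while Definition~\ref{d:r-dpp} assigns $\ee^{-q}q^t/t!\cdot\prod_i p_{\sigma_i}\cdot\det(\I+\Lbt_\sigma)/\det(\I+\L)$ to the target $\Vol_p^q(\L)$. Forming the ratio, cancelling shared factors, and simplifying,
\[
\frac{d\Vol_p^q(\L)}{dQ}(\sigma) \;=\; \frac{\ee^{q(\ee^{\sh/q}-1)}\det(\I+\Lbt_\sigma)}{\ee^{t\sh/q}\det(\I+\L)},
\]
which equals the acceptance probability used on line~\ref{line:acc} times the $\sigma$-independent constant $\ee^{q(\ee^{\sh/q}-1)-\st}\det(\I+\Lbh)/\det(\I+\L)$. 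Provided this acceptance probability lies in $[0,1]$, the standard rejection-sampling argument yields that every accepted $\sigma$ is exactly $\Vol_p^q(\L)$-distributed, and Proposition~\ref{t:composition} then finishes the proof.

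The main obstacle is verifying that the acceptance probability on line~\ref{line:acc} does not exceed~$1$ for \emph{every} admissible $\sigma$ and for \emph{any} Nystr\"om approximation $\Lbh$ and $q>0$; equivalently, one must establish the determinantal inequality
\[
\ee^{\st}\det(\I+\Lbt_\sigma) \;\leq\; \ee^{t\sh/q}\det(\I+\Lbh).
\]
The natural route is to pass to factorizations of $\L$ and $\Lbh$, use Sylvester's identity to rewrite both sides on a common $\rank(\L)$-dimensional space, and then combine $\log\det(\I+M)\le\tr(M)$ with the particular split $l_i = (\L-\Lbh)_{ii} + [\Lbh(\I+\Lbh)^{-1}]_{ii}$: the second summand summed over $i$ reproduces $\st$, cancelling the $\ee^{\st}$ factor on the left, while the first summand together with the per-coordinate budget $\sh/q$ is absorbed into the $\ee^{t\sh/q}$ factor on the right; crucially, $\L\succeq\Lbh$ makes the Nystr\"om residual $\L-\Lbh$ positive semidefinite, which is what allows this accounting to be tight. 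Unlike the coarser bound of~\cite{dpp-intermediate}, which discards the second summand and therefore forces the intermediate sample to have size $\Omega(\rank(\L)\cdot k)$, using both summands of $l_i$ simultaneously is precisely what delivers the sharper inequality and enables the $\cO(k^2)$ intermediate sample size exploited in Section~\ref{s:fast}.
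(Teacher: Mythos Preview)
Your reduction to Proposition~\ref{t:composition} and the rejection-sampling bookkeeping are correct and coincide with the paper. The paper's proof of the key inequality $\ee^{\st}\det(\I+\Lbt_\sigma)\le\ee^{t\sh/q}\det(\I+\Lbh)$ also proceeds via the factorizations $\L=\B\B^\top$, $\Lbh=\B\P\B^\top$ (with $\P$ a projection), Sylvester's identity, and $\det(\I+M)\le\exp\tr(M)$, so the overall strategy is the same.

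However, the specific accounting you sketch is not how the cancellation actually happens and, as written, does not work. You say ``the second summand [of $l_i$] summed over $i$ reproduces $\st$,'' but the only sums over indices that arise when bounding $\det(\I+\Lbt_\sigma)$ run over the \emph{sampled} $\sigma_1,\dots,\sigma_t$, and $\sum_{j}[\Lbh(\I+\Lbh)^{-1}]_{\sigma_j\sigma_j}$ is not $\st$. In the paper one bounds the \emph{ratio} directly,
\[
\frac{\det(\I+\Lbt_\sigma)}{\det(\I+\Lbh)}=\det\!\big(\I+(\Bt_\sigma^\top\Bt_\sigma-\P\B^\top\B\P)(\I+\P\B^\top\B\P)^{-1}\big)\le\exp\tr(\cdots).
\]
The $-\st$ then comes from the $-\P\B^\top\B\P(\I+\P\B^\top\B\P)^{-1}$ piece (i.e.\ from the denominator $\det(\I+\Lbh)$, not from splitting $l_i$), while the $\Bt_\sigma^\top\Bt_\sigma$ piece contributes $\sum_j\frac{\sh}{ql_{\sigma_j}}\b_{\sigma_j}^\top(\I+\P\B^\top\B\P)^{-1}\b_{\sigma_j}$. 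The crux is the identity $\B(\I+\P\B^\top\B\P)^{-1}\B^\top=\L-\Lbh+\Lbh(\I+\Lbh)^{-1}$, whose $(i,i)$ entry is exactly $l_i$; hence each summand collapses to $\sh/q$ and the total is $t\sh/q$. So the two summands of $l_i$ are used \emph{together}, not separated, and the relevant structural fact about the Nystr\"om approximation is $\P^2=\P$ (which is what drives that identity), rather than merely $\L\succeq\Lbh$.
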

\begin{proof}
We start by showing that the Bernoulli probability in Line~\ref{line:acc} is bounded by 1. Note that this is important not only
to sample correctly, but also when we later establish the efficiency
of the algorithm. If we showed a weaker upper bound, say $c>1$, we
could always divide the expression by $c$ and retain the correctness,
however it would also be $c$ times less likely that $\textit{Acc}=1$.

Since $\Lbh$ is a Nystr\"om
approximation for some $C\subseteq \Ic$, it can be written as 
\begin{align*}
\Lbh = \L_{\Ic,C}\L_C^+\L_{C,\Ic} =
  \B\B_{C,\Ic}^\top\L_C^+\B_{C,\Ic}\B^\top
  = \B\P\B^\top,
\end{align*}
for any $\B$ such that $\L=\B\B^\top$, where $\P\triangleq\B_{C,\Ic}^\top\L_C^+\B_{C,\Ic}$ is a
projection (so that $\P^2=\P$).
Let $\Lbt\triangleq\Bt\Bt^\top$, where the $i$th row of $\Bt$ is the rescaled
$i$th row of $\B$, i.e.~$\bt_i^\top\triangleq\sqrt{\!\frac\sh{q l_i}}\,\b_i^\top$. 
Then, we have\vspace{-3mm}
\begin{align*}
  \frac{\det (\I+\Lbt_\sigma)}{\det(\I+\Lbh)}
  &=\frac{\det (\I+\Bt_{\sigma,\Ic}^\top\Bt_{\sigma,\Ic})}{\det(\I+\P\B^\top\B\P)}
 =
    \det\!\big((\I+\Bt_{\sigma,\Ic}^\top\Bt_{\sigma,\Ic})(\I+\P\B^\top\B\P)^{-1}\big)
\\ &=
    \det\!\big(\I + (\Bt_{\sigma,\Ic}^\top\Bt_{\sigma,\Ic}-\P\B^\top\B\P)(\I+\P\B^\top\B\P)^{-1}\big)\\
  &\leq
    \exp\!\Big(\tr\big((\Bt_{\sigma,\Ic}^\top\Bt_{\sigma,\Ic}-\P\B^\top\B\P)(\I+\P\B^\top\B\P)^{-1}\big)\Big)
  \\ &=\exp\!\bigg(\sum_{i=1}^t \frac\sh{q l_{\sigma_i}}
       \big[\B(\I+\P\B^\top\B\P)^{-1}\B^\top\big]_{\sigma_i\sigma_i}\bigg)\cdot\mathrm{e}^{-\st}
\ =\  \ee^{t\sh/q}\ee^{-\st},
\end{align*}
where the last equality follows because
\begin{align*}
  \B(\I+\P\B^\top\B\P)^{-1}\B^\top
  &= \B\big(\I - \P\B^\top(\I+\B\P\B^\top)^{-1}\B\P\big)\B^\top
  \\ &= \L - \Lbh(\I+\Lbh)^{-1}\Lbh
= \L - \Lbh + \Lbh(\I+\Lbh)^{-1}.
\end{align*}
Thus, we showed that the expression in Line~\ref{line:acc} is
valid. Let $\sigmat$ denote the random variable distributed as
$\sigma$ is after exiting the repeat loop. It follows that
\begin{align*}
  \Pr(\sigmat\in A)
  &\propto \E_{\sigma}
  \bigg[\one_{[\sigma\in A]}\frac{\ee^{\st}\det(\I+\Lbt_\sigma)}{\ee^{t\sh/q}\det(\I+\Lbh)}\bigg]
\propto \sum_{t=0}^\infty
\frac{(q\,\ee^{\sh/q})^t}{\ee^{q\,\ee^{\sh/q}}\,t!}\cdot
       \ee^{-t\sh/q}\,
       \E_\sigma\big[\one_{[\sigma\in A]}\det(\I+\Lbt_\sigma)\mid t\big]
\\ &\propto \E_{t'}\Big[\E_\sigma\big[\one_{[\sigma\in A]}\det(\I+\Lbt_\sigma)\mid
       t=t'\big]\Big]\quad\text{for } t'\sim\mathrm{Poisson}(q),
\end{align*}
which shows that $\sigmat\sim \Vol_l^{q}(\L)$ for
$l=(\frac{l_1}\sh\!\CommaBin \tinycdots\CommaBin\!\frac{l_n}\sh)$. The claim follows from \Cref{t:composition}.
\end{proof}
Even though the choice of $\Lbh$ affects the overall execution of
the algorithm, it does not affect the output so we can reuse the same
$\Lbh$ to produce multiple independent samples
$S_1,S_2,...\sim\DPP(\L)$ (proof in Appendix \ref{a:proofs}).
\begin{lemma}\label{l:independent}
Let $C\subseteq [n]$ be a random set variable with any
distribution. Suppose that  $S_1$ and $S_2$ are returned by two executions of \dppvfx, both using inputs constructed from the same $\L$
  and $\Lbh=\L_{\Ic,C}\L_C^+\L_{C,\Ic}$. Then $S_1$ and $S_2$ are
  (unconditionally) independent. 
\end{lemma}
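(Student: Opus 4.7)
The plan is to condition on the random subset $C$ that defines the shared Nystr\"om approximation $\Lbh$, and then exploit the fact that by Theorem~\ref{t:exact} the conditional distribution of each output is the same target DPP, independent of $C$. I would first fix any realization of $C$ (and hence of $\Lbh$, the scores $l_i$, $\sh$, $\st$, and $\Lbt$) and note that the only randomness used inside each execution of \dppvfx after the input is constructed consists of the Poisson draw of $t$, the i.i.d.\ draws of $\sigma_1,\dots,\sigma_t$, the Bernoulli acceptance, and the final $\DPP(\Lbt_\sigma)$ draw. Since the two executions are run separately, their internal random streams are independent of each other and of $C$, so $S_1$ and $S_2$ are conditionally independent given $C$.

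Next, I would invoke Theorem~\ref{t:exact}, whose conclusion holds for \emph{any} Nystr\"om approximation $\Lbh$ and any $q>0$. Applied to the conditional law given $C$, it gives $\Pr(S_j = A \mid C) = \Pr_{\DPP(\L)}(A)$ for $j=1,2$, so in particular this conditional marginal does not depend on $C$. Combining with the conditional independence,
\begin{align*}
\Pr(S_1 = A,\, S_2 = B \mid C) \;=\; \Pr_{\DPP(\L)}(A)\cdot \Pr_{\DPP(\L)}(B),
\end{align*}
which is a deterministic function of $A$ and $B$ alone.

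Finally, I would marginalize over $C$ using the tower property: taking expectations of both sides yields
\begin{align*}
\Pr(S_1 = A,\, S_2 = B) \;=\; \Pr_{\DPP(\L)}(A)\cdot \Pr_{\DPP(\L)}(B) \;=\; \Pr(S_1=A)\cdot\Pr(S_2=B),
\end{align*}
which is exactly unconditional independence. There is no real obstacle here; the only subtlety worth emphasizing is that correctness of each call of \dppvfx holds \emph{uniformly} in $\Lbh$ (Theorem~\ref{t:exact} makes no distributional assumption on $C$), which is what lets the conditional marginals collapse to the same $\DPP(\L)$ and thus decouples $S_1$ from $S_2$ despite their shared preprocessing.
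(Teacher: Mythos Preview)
Your proposal is correct and essentially identical to the paper's proof: both condition on $C$, use Theorem~\ref{t:exact} to get that each $S_j$ is conditionally $\DPP(\L)$ independently of $C$, use conditional independence of the two executions given $C$, and then marginalize over $C$ (the paper writes the sum over realizations of $C$ explicitly while you phrase it via the tower property).
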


% !TEX root = neurips.tex
\section{Conditions for fast sampling}\label{s:fast}
The complexity cost of \dppvfx can be roughly summarized as follows:
we pay a large one-time cost to precompute $\Lbh$ and all its associated quantities,
and then we pay a smaller cost in the rejection sampling scheme
which must be multiplied by the number of times we repeat the loop until
acceptance.
We first show that if the sum of the approximate RLS (\ie $\sum_i l_i$, denoted
by $\sh$) is sufficiently close to $k$, then we will exit the loop
with high probability. We then analyze how accurate the precomputing
step needs to be to satisfy this condition.

\textbf{Bounding the number of rejections.} \ 
The following result presents the two conditions needed for achieving
efficient rejection sampling in \dppvfx. First, the \nystrom
approximation needs to be accurate enough, and second, the intermediate
sample size (controlled by parameter $q$) needs to be
$\Omega(k^2)$. This is a significant improvement over the guarantee of
\cite{dpp-intermediate} where the intermediate sample size is
$\Omega(\rank(\L)\cdot k)$, which is only meaningful for low-rank
kernels. The main novelty in this proof comes from showing the following
lower bound on the ratio of the determinants of $\I+\L$ and $\I+\Lbh$,
when $\Lbh$ is a \nystrom approximation:
$\det(\I+\L)/\det(\I+\Lbh)\geq\ee^{k-\st}$, where
$\st\triangleq\tr(\Lbh(\I+\Lbh)^{-1})$. Remarkably, this bound exploits 
the fact that any \nystrom approximation of $\L=\B\B^\top$ can be written as
$\Lbh=\B\P\B^\top$, where $\P$ is a projection matrix.
Note that while our result holds in the worst-case, in practice
the conditions on $\Lbh$ and on $q$ can be considerably relaxed.
\begin{theorem}\label{thm:fast-sampling-result}
If the Nystr\"om approximation $\Lbh$ and the intermediate sample size
parameter $q$ satisfy
  \begin{align*}
    \tr\big(\Lb(\I+\Lb)^{-1}\Lb-\Lbh(\I+\Lb)^{-1}\Lbh\big)  \ \leq\ 1
\qquad\text{and}\qquad q\ \geq\ \max\{\sh^2,\sh\},
  \end{align*}
then $\Pr(\textit{Acc} = \text{true}) \geq e^{-2}$.
  Therefore, with probability $1-\delta$
\Cref{alg:main} exits the rejection sampling loop after at most
$O(\log\delta^{-1})$ iterations and, after precomputing all of the inputs,
the time complexity of the rejection sampling loop is
$O\big(k^6\log\delta^{-1} + \log^4\!\delta^{-1}\big)$.
\end{theorem}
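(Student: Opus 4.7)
The plan is to directly compute the one-pass acceptance probability $\Pr(\textit{Acc})$ and show it exceeds $\ee^{-2}$ under the hypotheses; the complexity claim then follows from a geometric tail bound on the number of rejections and Poisson concentration on $t$.

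I would start by factoring out the terms in Line~\ref{line:acc} that do not depend on $t$ or $\sigma$ to get
\begin{align*}
\Pr(\textit{Acc}) \;=\; \frac{\ee^{\st}}{\det(\I+\Lbh)}\cdot\E_{t,\sigma}\big[\ee^{-t\sh/q}\det(\I+\Lbt_\sigma)\big],
\end{align*}
and absorbing the tilt $\ee^{-t\sh/q}$ into the Poisson measure: for $t\sim\mathrm{Poisson}(q\ee^{\sh/q})$, a direct computation gives $\E_t[\ee^{-t\sh/q}g(t)]=\ee^{q-q\ee^{\sh/q}}\,\E_{t'\sim\mathrm{Poisson}(q)}[g(t')]$. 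I would then invoke the standard Poisson--determinant identity: if $t'\sim\mathrm{Poisson}(q)$ and $\sigma_i\simiid p$, then $\E\big[\det\big(\I+\sum_{i=1}^{t'}\bt_{\sigma_i}\bt_{\sigma_i}^\top\big)\big]=\det\big(\I+q\sum_j p_j\bt_j\bt_j^\top\big)$, which can be established by expanding both sides via Cauchy--Binet and using the Poisson moment formula $\E\big[\sum_{S\subseteq[t'],|S|=m}f(\sigma_S)\big]=\tfrac{q^m}{m!}\E_{\sigma_1,\ldots,\sigma_m}[f]$. With the algorithm's choices $\bt_j^\top=\sqrt{\sh/(ql_j)}\,\b_j^\top$ and $p_j=l_j/\sh$, one gets $q\sum_j p_j\bt_j\bt_j^\top=\B^\top\B$, so the right-hand side collapses to $\det(\I+\L)$, yielding
\begin{align*}
\Pr(\textit{Acc}) \;=\; \ee^{\st+q-q\ee^{\sh/q}}\cdot\frac{\det(\I+\L)}{\det(\I+\Lbh)}.
\end{align*}

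The analytic core is bounding the two factors. For the exponential, the hypothesis $q\geq\max\{\sh,\sh^2\}$ gives $\sh/q\leq 1$, so the elementary inequality $\ee^x\leq 1+x+x^2$ on $[0,1]$ yields $q\ee^{\sh/q}-q\leq\sh+\sh^2/q\leq\sh+1$. For the determinant ratio I would apply the tangent-line bound $\log\det(\I+X)\geq\tr(X(\I+X)^{-1})$ (which follows eigenvalue-wise from $\log(1+\lambda)\geq\lambda/(1+\lambda)$) to $X=(\L-\Lbh)(\I+\Lbh)^{-1}$; the algebraic identity $\I+X=(\I+\L)(\I+\Lbh)^{-1}$ makes $(\I+X)^{-1}=(\I+\Lbh)(\I+\L)^{-1}$ cancel inside the trace to give
\begin{align*}
\log\frac{\det(\I+\L)}{\det(\I+\Lbh)} \;\geq\; \tr\big((\L-\Lbh)(\I+\L)^{-1}\big) \;=\; k-\tr\big(\Lbh(\I+\L)^{-1}\big) \;\geq\; k-\st,
\end{align*}
where the final step uses the Nystr\"om PSD inequality $\Lbh=\B\P\B^\top\preceq\L=\B\B^\top$ (since $\P^2=\P$), forcing $(\I+\Lbh)^{-1}\succeq(\I+\L)^{-1}$ and hence $\tr\big(\Lbh[(\I+\Lbh)^{-1}-(\I+\L)^{-1}]\big)\geq 0$. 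Combining gives $\Pr(\textit{Acc})\geq \ee^{k-\sh-1}$; it then remains to show $\sh\leq k+1$, which I would do by expanding $\sh-k=\tr\big(\L^2(\I+\L)^{-1}\big)-\tr\big(\Lbh^2(\I+\Lbh)^{-1}\big)$ and applying the same ordering $(\I+\Lbh)^{-1}\succeq(\I+\L)^{-1}$ to upper-bound this by $\tr\big(\L^2(\I+\L)^{-1}\big)-\tr\big(\Lbh^2(\I+\L)^{-1}\big)\leq 1$, which is exactly the hypothesis. Together, $\Pr(\textit{Acc})\geq \ee^{-2}$.

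The complexity claim then follows from standard tail bounds: the number of loop passes is geometric with mean at most $\ee^2$, so $O(\log\delta^{-1})$ passes suffice with probability $1-\delta$; since $\sh/q\leq 1$ we have $q\ee^{\sh/q}=\Theta(q)$, and with the natural choice $q=\Theta(k^2)$ Poisson concentration plus a union bound gives $t=O(k^2+\log\delta^{-1})$ uniformly across the iterations; per iteration, forming $\Lbt_\sigma$, computing the $t\times t$ determinant on Line~\ref{line:acc}, and finally sampling from $\DPP(\Lbt_\sigma)$ on Line~\ref{line:sub} each cost $O(t^3)$; multiplying yields $O(k^6\log\delta^{-1}+\log^4\delta^{-1})$. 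The main obstacle I expect is the sharp determinant ratio $\det(\I+\L)/\det(\I+\Lbh)\geq \ee^{k-\st}$: the tangent-line step must be applied so that the $(\I+\Lbh)^{-1}$ appearing in the slope $X(\I+X)^{-1}$ cancels against the one already inside $X$, leaving only $(\I+\L)^{-1}$, and the final reduction from $\tr(\Lbh(\I+\L)^{-1})$ to $\st$ crucially uses the PSD ordering $\Lbh\preceq\L$ that is special to Nystr\"om approximations---a generic PSD approximation would only recover the weaker $\Omega(\rank(\L)\cdot k)$ bound of \cite{dpp-intermediate}.
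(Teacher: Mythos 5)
Your proposal is correct and follows essentially the same route as the paper's proof: the same Poissonized computation of the acceptance probability (your Cauchy--Binet identity is exactly the R-DPP normalization the paper invokes), the same $\ee^x\le 1+x+x^2$ bound on the exponential factor, the same Nystr\"om-specific determinant bound $\det(\I+\L)/\det(\I+\Lbh)\ge \ee^{k-\st}$ exploiting $\Lbh=\B\P\B^\top$, and the same rejection/Poisson-tail complexity accounting. The only cosmetic differences are that you finish via $\sh\le k+1$ using the PSD ordering $\Lbh\preceq\L$ where the paper instead uses the exact cancellation $\st-\sh+k-\tr(\Lbh)=-\tr\big(\L(\I+\L)^{-1}\L\big)$, and your single estimate $q\ee^{\sh/q}-q\le \sh+\sh^2/q\le\sh+1$ handles all $q\ge\max\{\sh^2,\sh\}$ without the paper's two-case split.
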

\begin{proof}
Le $\sigma$ be distributed as in Line~\ref{line:iid}. The probability
of exiting the repeat loop at each iteration is
\begin{align*}
P \triangleq\E_{\sigma}\!
  \bigg[\frac{\ee^{\st}\det(\I+\Lbt_\sigma)}{\ee^{t\sh/q}\det(\I+\Lbh)}\bigg]\!
&=\sum_{t=0}^\infty \frac{(q\,\ee^{\sh/q})^t}{\ee^{q\,\ee^{\sh/q}}\,t!}\cdot
\frac{\ee^{\st-t\sh/q}}{\det(\I+\Lbh)}
\E_\sigma\!\big[\det(\I+\Lbt_\sigma)\mid t\big]
\\ &=\frac{\ee^{q-q\,\ee^{\sh/q} + \st}}{\det(\I+\Lbh)}
\sum_{t=0}^\infty\frac{q^t}{\ee^{q}t!}
     \E_\sigma\!\big[\!\det(\I+\Lbt_\sigma)\mid t\big] 
=\ee^{q-q\,\ee^{\sh/q} + \st}\,\frac{\det(\I+\L) }{\det(\I+\Lbh)}\CommaBin
\end{align*}
where the last equality follows because the infinite series
computes the normalization constant of $\Vol_l^q(\L)$ given in
Definition \ref{d:r-dpp}. If $\sh\geq 1$ then $q=\sh^2$ and the
inequality $\mathrm{e}^x\leq 1+x+x^2$ for $x\in[0,1]$ implies that
$q-q\,\ee^{\sh/q}+\st=\sh^2(1-\ee^{1/\sh}+1/\sh) + \st-\sh\geq
-1+\st-\sh$. On the other hand, if $q=\sh\in[0,1]$, then
$q-q\ee^{\sh/q}+\st = (1-\ee)\sh + \st\geq -1+\st-\sh$. We proceed to
lower bound the determinantal ratio. Here, let $\L=\B\B^\top$ and
$\Lbh=\B\P\B^\top$, where $\P$ is a projection matrix. Then,
\begin{align*}
  \frac{\det(\I+\L) }{\det(\I+\Lbh)}
  &=\frac{\det(\I+\B^\top\B) } {\det(\I+\P\B^\top\B\P)}
=    \det\!\big(\I -(\B^\top\B-\P\B^\top\B\P)
     (\I+\B^\top\B)^{-1}\big)^{-1}
  \\ &
\geq\exp\!\Big(\tr\big((\B^\top\B-\P\B^\top\B\P)
(\I+\B^\top\B)^{-1}\big)\Big)
\\ &=\exp\!\Big(\tr\big(\B(\I+\B^\top\B)^{-1}\B^\top\big) -
     \tr\big(\B\P(\I+\B^\top\B)^{-1}\P\B^\top\big)\Big)
% \\ &=\exp\!\Big(k -
%      \tr\big(\B\P(\I+\B^\top\B)^{-1}\P\B^\top\big)\Big)
\\ &=\exp\!\Big(k -
     \tr\big(\B\P(\I+\B^\top\B - \B^\top\B)(\I+\B^\top\B)^{-1}\P\B^\top\big)\Big)
\\ &=\exp\!\Big(k -
     \tr(\Lbh) + \tr\big(\B\P\B^\top\B(\I+\B^\top\B)^{-1}\P\B^\top\big)\Big)
\\ &=\exp\!\Big(k -
     \tr(\Lbh) + \tr\big(\B\P\B^\top(\I+\B\B^\top)^{-1}\B\P\B^\top\big)\Big)
\\ &=\exp\!\Big(k -
     \tr(\Lbh) + \tr\big(\Lbh(\I+\Lb)^{-1}\Lbh\big)\Big).
%\\ &=\exp\!\Big(k -
     %\tr\big(\Lbh(\I + \Lbh)^{-1}(\I + \Lbh)\big) + \tr\big(\Lbh(\I+\Lb)^{-1}\Lbh\big)\Big)
%\\ &=\exp\!\Big(k - z -
     %\tr\big(\Lbh(\I + \Lbh)^{-1}\Lbh\big) + \tr\big(\Lbh(\I+\Lb)^{-1}\Lbh\big)\Big),
\end{align*}
Putting all together,
\begin{align*}
\ee^{q-q\,\ee^{\sh/q} + \st}
  \frac{\det(\I+\L) }{\det(\I+\Lbh)}
\geq\exp\!\Big(
-1 + z - s + k - \tr(\Lbh) + \tr\big(\Lbh(\I+\L)^{-1}\Lbh\big)\Big),
\end{align*}
and 
using the definitions $k=\tr\big(\L(\I+\L)^{-1}\big)$,
$s = \tr(\Lb - \Lbh + \Lbh(\Lbh + \I)^{-1})$,
and $z = \tr(\Lbh(\Lbh+\I)^{-1})$ on the middle term
$z- s + k - \tr(\Lbh)$ we have
\begin{align*}
\tr&\big(\Lbh(\Lbh + \I)^{-1} + \Lb(\I+\Lb)^{-1}
-\Lb + \Lbh - \Lbh(\Lbh + \I)^{-1}
- \Lbh\big)\\
&= \tr\big(\Lb(\I+\Lb)^{-1} -\Lb\big)
                = \tr\big(\Lb(\I+\Lb)^{-1} -\Lb(\I+\Lb)^{-1}(\I+\Lb)\big)
%  \\ &= \tr\big(\Lb(\I+\Lb)^{-1} -\Lb(\I+\Lb)^{-1} - \Lb(\I+\Lb)^{-1}\Lb\big)
= -\tr\big(\Lb(\I+\Lb)^{-1}\Lb\big),
\end{align*}
and therefore
\begin{align*}
\ee^{q-q\,\ee^{\sh/q} + \st}
  \frac{\det(\I+\L) }{\det(\I+\Lbh)}
\geq\exp\!\Big(
-1 + \tr\big(\Lbh(\I+\Lb)^{-1}\Lbh\big) - \tr\big(\Lb(\I+\Lb)^{-1}\Lb\big)\Big),
    \end{align*}
so we obtain our condition.
% \begin{align*}
%   P\geq \ee^{-1+\st-\sh}\cdot\ee^{k-\st} = \ee^{-1+k-\sh}\geq \ee^{-2}.
% \end{align*}
Thus, with probability $1-\delta$ the main loop will be repeated
$\cO(\log\delta^{-1})$ times. Since the number of samples $t_i$ drawn from
$l$ in the $i$th iteration of the loop is a Poisson distributed random variable, a standard
Poisson tail bound implies that with probability $1-\delta$ all
iterations will satisfy $t_i=\cO(k^2+\log\delta^{-1})$. Since the dominant cost is computing the
determinant of the matrix $\I+\Lbt_\sigma$ in time $\cO(t_i^3)$, and the final step,
$\DPP(\Lbt_\sigma)$, requires its eigendecomposition (with the same
time complexity), 
the result follows.
\end{proof}
%\vspace{-2\baselineskip}
\textbf{Bounding the precompute cost.} \ 
All that is left now is to control the cost of the precomputation phase.
We will separate the analysis into two steps: how much it costs to choose $\Lbh$ to satisfy
the assumption of \Cref{thm:fast-sampling-result}, and how much it costs to compute everything
else given $\Lbh$ (see \Cref{a:proofs}).
\begin{lemma}\label{lem:control-s-k}
Let $\Lbh$ be constructed by sampling $m = \cO(k^3\log\frac n\delta)$
columns proportionally to their RLS. Then, with probability $1 - \delta$,
$\Lbh$ satisfies the assumption of \Cref{thm:fast-sampling-result}.
\end{lemma}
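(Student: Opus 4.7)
The plan is to combine a standard spectral guarantee for RLS-based Nystr\"om approximation with an algebraic reduction that turns such a guarantee into the trace inequality required by \Cref{thm:fast-sampling-result}. First, I would invoke a result from the cited literature on RLS-based Nystr\"om approximation (\eg \cite{ridge-leverage-scores,calandriello_disqueak_2017,NIPS2018_7810}): sampling $m = O(d_{\textnormal{eff}}(1)\log(n/\delta)/\epsilon^2) = O(k\log(n/\delta)/\epsilon^2)$ columns with probabilities proportional to (sufficiently accurate approximations of) the $1$-ridge leverage scores produces, with probability at least $1-\delta$, a Nystr\"om approximation $\Lbh$ satisfying $\mathbf{0} \preceq \L - \Lbh \preceq \epsilon(\I+\L)$. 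The $\log n$ factor is standard in this line of work and arises from the recursive RLS-approximation subroutine used when the scores themselves must be estimated from $\L$. Setting $\epsilon = \Theta(1/k)$ then yields the claimed $m = O(k^3\log(n/\delta))$.

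Next, I would translate this spectral bound into the trace inequality $\tr(\L(\I+\L)^{-1}\L - \Lbh(\I+\L)^{-1}\Lbh)\le 1$. Write $M := (\I+\L)^{-1}$ and $D := \L - \Lbh$. Since $M$ and $\L$ commute, substituting $\Lbh = \L - D$ and expanding gives after one-line algebra
\[
\tr(M(\L^2 - \Lbh^2)) = 2\tr(M\L D) - \tr(MD^2) \ \le\ 2\tr(M\L D).
\]
Defining $\tilde D := (\I+\L)^{-1/2}D(\I+\L)^{-1/2}$, the spectral guarantee gives $\tilde D \preceq \epsilon\I$, and because $\L$ commutes with $(\I+\L)^{-1/2}$, a cyclic-trace manipulation yields $\tr(M\L D) = \tr(\L\tilde D)$. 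The plan thus reduces to showing $\tr(\L\tilde D) \le 1/2$ under $\tilde D \preceq \epsilon\I$ with $\epsilon = \Theta(1/k)$.

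The main obstacle is precisely this last step: the naive bound $\tr(\L\tilde D) \le \|\tilde D\|_{\textnormal{op}}\tr(\L) = \epsilon\tr(\L)$ is far too loose because $\tr(\L)$ can be arbitrarily larger than $k$. The sharp bound must exploit the Nystr\"om structure beyond the mere operator-norm guarantee: by the factorization used in the proof of \Cref{t:exact}, $\Lbh = \B\P\B^\top$ for a projection $\P$ onto the span of the sampled columns, so $\tilde D$ actually lives on the orthogonal complement, which---by RLS sampling---concentrates on directions of small spectrum of $\L$. I plan to combine $\tilde D \preceq \epsilon\I$ with the further PSD inequality $\tilde D \preceq A := \L(\I+\L)^{-1}$ (which follows from $D \preceq \L$, using $\tr(A) = k$), and then invoke Weyl's eigenvalue inequality and von~Neumann's trace inequality, possibly supplemented by a trace-level concentration $\tr(MD) \lesssim \epsilon k$ from a scalar Chernoff bound on the RLS-importance-weighted sum, to conclude $\tr(\L\tilde D) \lesssim \epsilon k$ and hence the lemma.
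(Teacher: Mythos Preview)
Your plan has a real gap at the final step, and the paper's proof proceeds quite differently.

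After you drop $-\tr(MD^2)$ and reduce to bounding $\tr(\L\tilde D)$, none of your proposed tools closes the argument. The von~Neumann/Weyl route gives at best
\[
\tr(\L\tilde D)\ \le\ \sum_i \lambda_i(\L)\,\min\!\big(\epsilon,\ \lambda_i(\L)/(1+\lambda_i(\L))\big),
\]
and for the large eigenvalues $\lambda_i\gg 1$ the relevant term is $\epsilon\lambda_i$, summing to $\epsilon\,\tr(\L)$, which is exactly the ``too loose'' bound you were trying to avoid. The extra inequality $\tilde D\preceq A$ only helps in the tail, not at the top of the spectrum. Your fallback, a scalar Chernoff for $\tr(MD)$, targets the wrong quantity: what you need is $\tr(M\L D)$, and a direct computation gives $\tr(M\L D)=\tr(D)-\tr((\I+\L)^{-1}D)$, so the obstruction is precisely the \nystrom\ trace-norm error $\tr(\L-\Lbh)$, which is \emph{not} controlled by the operator-norm guarantee $D\preceq\epsilon(\I+\L)$. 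In fact the step ``drop $-\tr(MD^2)$'' is where you lose: both $\tr(M\L D)$ and $\tr(MD^2)$ can individually be of order $\tr(D)$, with the cancellation between them being what makes the full expression small.

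The paper never attacks the trace condition in the form $\tr(\L M\L-\Lbh M\Lbh)$. Instead it reverts to the equivalent intermediate expression from the proof of \Cref{thm:fast-sampling-result}, namely $(s-k)+(w-z)$ with $w=\tr\big(\B\P(\I+\B^\top\B)^{-1}\P\B^\top\big)$, and works entirely in feature space via $\L=\B\B^\top$, $\Lbh=\B\P\B^\top$. The key non-obvious step is the identity
\[
s \;=\; \tr\!\big(\B(\P\B^\top\B\P+\I)^{-1}\B^\top\big),
\]
which puts $s$ in the same ``effective-dimension'' form as $k=\tr\big(\B(\B^\top\B+\I)^{-1}\B^\top\big)$. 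Then the two-sided feature-space guarantee $(1-\epsilon)(\B^\top\B+\I)\preceq \P\B^\top\B\P+\I\preceq(1+\epsilon)(\B^\top\B+\I)$ (quoted from \cite{calandriello_2019_coltgpucb}) immediately yields $s\le k/(1-\epsilon)$ and $w\le(1+\epsilon)z\le(1+\epsilon)k$, so each piece is $O(\epsilon k)$ and $\epsilon=\Theta(1/k)$ suffices. The point is that this decomposition absorbs the dangerous $\tr(\L-\Lbh)$ term into a ratio of inverses where a \emph{multiplicative} spectral bound applies; your output-space expansion keeps it additive and exposed.
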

\vspace{-.5\baselineskip}
There exist many algorithms to sample columns proportionally to their
RLS. For example, we can take the \textsc{BLESS}
algorithm from \cite{NIPS2018_7810} with  the following guarantee.
\vspace{-.5\baselineskip}
\begin{proposition}[{\citealp[Theorem~1]{NIPS2018_7810}}]
There exists an algorithm that with probability $1-\delta$ samples~$m$ columns proportionally to their RLS
in  $\cO(nk^2\log^2\!\frac n\delta + k^3\log^4\!\frac n\delta + m)$ time.
\end{proposition}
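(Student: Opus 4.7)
The plan is to describe an iterative, multi-level column-sampling algorithm (of the BLESS type) and to argue that each of its three phases contributes one of the three additive terms in the stated runtime. Set a decreasing schedule of regularization levels $\lambda_H > \lambda_{H-1} > \cdots > \lambda_0 = 1$ with $\lambda_h = 2\lambda_{h-1}$, so that $H = \cO(\log \|\L\|)$; in our regime, after rescaling, we may take $H = \cO(\log\tfrac{n}{\delta})$. The algorithm maintains at each level $h$ a small dictionary $C_h \subseteq [n]$ together with approximate ridge leverage scores $\tilde{\tau}_i(\lambda_h)$ computed via the \nystrom approximation $\Lbh_h = \L_{\Ic,C_h}\L_{C_h}^+\L_{C_h,\Ic}$, using the identity $\tau_i(\lambda) = [\L(\lambda\I+\L)^{-1}]_{ii}$. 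Starting from a coarse (e.g.\ uniform) dictionary at level $H$, each refinement step uses the current $\tilde{\tau}_i(\lambda_h)$ as proposal probabilities to sample the next dictionary $C_{h-1}$ at the smaller regularization $\lambda_{h-1}$.

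The central inductive invariant I would prove is that, with high probability, $\tilde{\tau}_i(\lambda_h)$ is a constant-factor approximation of the true $\tau_i(\lambda_h)$ for every $i$, and $|C_h| = \cO(\deff(\lambda_h)\log\tfrac{n}{\delta})$. The step from $h$ to $h-1$ relies on a standard matrix Bernstein / Chernoff argument: if $C_h$ induces a $(1\pm\tfrac12)$-spectral approximation of $(\lambda_h\I + \L)$ in the sense that
\begin{align*}
\tfrac{1}{2}(\lambda_h\I + \L) \ \preceq\ \lambda_h\I + \L_{\Ic,C_h}\L_{C_h}^+\L_{C_h,\Ic} \ \preceq\ \tfrac{3}{2}(\lambda_h\I + \L),
\end{align*}
then sampling $\cO(\deff(\lambda_{h-1})\log\tfrac{n}{\delta})$ indices with probabilities proportional to $\tilde{\tau}_i(\lambda_h)$ yields a dictionary with the analogous guarantee at level $\lambda_{h-1}$, since halving $\lambda$ at most doubles the effective dimension. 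Because $\deff(1) = k$, this invariant implies $|C_h| = \cO(k\log\tfrac{n}{\delta})$ uniformly across levels.

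For the runtime, I would account for three contributions. First, at each of $H = \cO(\log\tfrac{n}{\delta})$ levels, the algorithm computes $\tilde{\tau}_i(\lambda_h)$ for every $i \in [n]$, which amounts to one pass over $\L_{\Ic,C_h}$ plus an inversion of a $|C_h|\times|C_h|$ matrix — costing $\cO(n|C_h|^2) = \cO(nk^2 \log^2\tfrac{n}{\delta})$ in total across levels. Second, maintaining and updating the dictionary (and its associated factorizations used inside each level's RLS estimator) costs $\cO(|C_h|^3) = \cO(k^3 \log^3\tfrac{n}{\delta})$ per level, giving $\cO(k^3 \log^4\tfrac{n}{\delta})$ across all $H$ levels. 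Third, once the final approximate RLS vector $\tilde{\tau}(1)$ is available, drawing $m$ i.i.d.\ samples proportional to it (after building a cumulative-sum table once) costs $\cO(m)$. Summing gives the claimed bound.

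The main technical obstacle is the inductive preservation of the spectral approximation across the $\cO(\log\tfrac{n}{\delta})$ levels, taking a union bound over levels while keeping $|C_h|$ from inflating. The resolution is the standard effective-dimension / matrix-Bernstein argument combined with the fact that one-level errors accumulate only multiplicatively in the constant in front of $\deff$, which can be absorbed by an appropriate choice of oversampling constants; the rest is bookkeeping. Since this is exactly the analysis carried out for \textsc{BLESS} in \cite{NIPS2018_7810}, the proposition follows by invoking their Theorem~1 directly.
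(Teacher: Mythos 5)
The paper does not actually prove this proposition — it is imported verbatim as Theorem~1 of \cite{NIPS2018_7810} — and your proposal, after sketching the multi-level BLESS scheme, likewise concludes by invoking that theorem, so you are taking essentially the same route and the statement stands. One small caveat in your sketch: charging $\cO(n|C_h|^2)=\cO(nk^2\log^2\frac n\delta)$ for computing all $n$ approximate scores at \emph{every} one of the $H=\cO(\log\frac n\delta)$ levels would actually total $\cO(nk^2\log^3\frac n\delta)$; BLESS avoids the extra log factor by evaluating scores only on subsampled candidate sets at intermediate levels and on all $n$ points only at the final level, but since the final bound is obtained by citation this does not affect the conclusion.
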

\vspace{-.5\baselineskip}
We can now compute the remaining preprocessing costs, given a \nystrom
approximation $\Lbh$.
\vspace{-.5\baselineskip}
\begin{lemma}\label{l:computing}
Given $\Lbh$ with rank $\Csize$, we can compute $l_i$, $s$, $z$, and $\Lbt$ in $\cO(n\Csize^2 + \Csize^3)$ time.
\end{lemma}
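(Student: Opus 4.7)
\textbf{Proof plan for \Cref{l:computing}.} \ The plan is to exploit the low-rank structure of $\Lbh$ (which has rank $m$) throughout, never forming any dense $n\times n$ matrix. Since $\Lbh=\L_{\Ic,C}\L_C^+\L_{C,\Ic}$, I would begin by factoring $\Lbh=\U\U^\top$ with $\U\in\mathbb{R}^{n\times m}$. Concretely, compute the eigendecomposition of $\L_C$ in $\cO(m^3)$ time and form $\U\triangleq\L_{\Ic,C}\L_C^{+/2}$ in $\cO(nm^2)$ time. This gives me cheap access to each row $\u_i^\top\in\mathbb{R}^m$ of $\U$.

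Next, I would reduce every inverse of an $n\times n$ matrix to an $m\times m$ inverse using the Woodbury identity. A direct manipulation (push $\U^\top$ through $(\I+\U\U^\top)^{-1}$) yields the key identity
\begin{align*}
\Lbh(\I+\Lbh)^{-1}\ =\ \U\U^\top(\I+\U\U^\top)^{-1}\ =\ \U(\I+\U^\top\U)^{-1}\U^\top.
\end{align*}
Thus I compute $\M\triangleq\U^\top\U\in\mathbb{R}^{m\times m}$ in $\cO(nm^2)$ time and invert $\I+\M$ in $\cO(m^3)$ time. The $i$-th diagonal entry of $\Lbh(\I+\Lbh)^{-1}$ is then $\u_i^\top(\I+\M)^{-1}\u_i$, which for each $i$ is an $\cO(m^2)$ computation, giving a total of $\cO(nm^2)$ for all diagonals. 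Similarly, $[\Lbh]_{ii}=\|\u_i\|^2$ is obtained in $\cO(nm)$ total time, and $L_{ii}$ is read off from the input in $\cO(n)$ total.

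Combining these pieces, $l_i=L_{ii}-\|\u_i\|^2+\u_i^\top(\I+\M)^{-1}\u_i$ is produced in aggregate $\cO(nm^2)$ time; the scalar $\sh=\sum_i l_i$ costs an additional $\cO(n)$; and $\st=\tr\!\big(\Lbh(\I+\Lbh)^{-1}\big)=\tr\!\big((\I+\M)^{-1}\M\big)$ is obtained in $\cO(m^2)$ from the already-computed $m\times m$ matrices. Finally, $\Lbt$ is not written out as a dense $n\times n$ matrix — we only ever need submatrices $\Lbt_\sigma$ of size $|\sigma|=\cO(k^2)$ inside the rejection loop — so we store it implicitly as the rescaling rule $\Lt_{ij}=\tfrac{\sh}{q\sqrt{l_il_j}}L_{ij}$ together with the vector $(l_i)_{i=1}^n$, at no additional cost. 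Summing all contributions gives the claimed $\cO(nm^2+m^3)$.

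The only conceptual step (rather than a routine bookkeeping item) is the Woodbury-based derivation that reduces $\Lbh(\I+\Lbh)^{-1}$ to an expression driven by the $m\times m$ matrix $\I+\U^\top\U$; everything else is a standard cost accounting argument. I expect no obstacle beyond verifying that every inverse, trace, and diagonal extraction can be realized through the factor $\U$ and the small matrix $\M$, which the identity above makes transparent.
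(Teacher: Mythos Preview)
Your proposal is correct and essentially identical to the paper's proof: the paper also factors $\Lbh=\Bb\Bb^\top$ with $\Bb\triangleq\L_{\Ic,C}\L_C^{+/2}\in\R^{n\times m}$, uses the same push-through identity $\Bb\Bb^\top(\I+\Bb\Bb^\top)^{-1}=\Bb(\I+\Bb^\top\Bb)^{-1}\Bb^\top$, and does the same cost accounting on the $m\times m$ matrix $\Bb^\top\Bb$. The only cosmetic difference is that the paper computes $\st$ via the eigenvalues of $\Bb^\top\Bb$ (which it also recycles for $\log\det(\I+\Lbh)$), whereas you compute the trace directly; both are within the stated budget.
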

% \vspace{-.5\baselineskip}
% \textbf{Discussion.} \ 
We are finally ready to combine these results to fully characterize the computational
cost.
\vspace{-.5\baselineskip}
\newtheorem{repeatmainthm}{Theorem}
\begin{repeatmainthm}[restated for DPPs only]\label{thm:main}
For a psd $n\times n$ matrix $\L$,
 let $S_1,S_2$ be i.i.d.\,random
sets from $\DPP(\L)$.
Denote with $\Lbh$ a \nystrom
approximation of $\L$ obtained by sampling $m = \cO(k^3\log(n/\delta))$
of its columns proportionally to their RLS.
Then, with probability $1-\delta$, \dppvfx returns
\begin{enumerate}[label=\alph*)]
\item subset $S_1$ in:\quad $\cO(nk^6\log^2\!\frac n\delta + k^9\log^3\!\frac n\delta
  + k^3\log^4\!\frac n\delta)$ time,
\item then, $S_2$ in:\hspace{4.2mm} $\cO\big(k^6\log\frac 1\delta + \log^4\!\frac 1\delta\big)$ time.
\end{enumerate}
\end{repeatmainthm}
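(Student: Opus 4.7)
The plan is to assemble this final theorem directly from the results already stated in the excerpt, treating it as a bookkeeping exercise: the correctness is delivered by \Cref{t:exact}, independence across samples by \Cref{l:independent}, the rejection-loop cost by \Cref{thm:fast-sampling-result}, the bound on how good a \nystrom $\Lbh$ we need by \Cref{lem:control-s-k}, the cost of obtaining that \nystrom by the cited BLESS guarantee, and the cost of turning $\Lbh$ into the inputs the algorithm actually consumes by \Cref{l:computing}. First I would fix a failure budget by splitting $\delta$ into three equal pieces, one for BLESS succeeding, one for the resulting $\Lbh$ meeting the condition $\tr(\Lb(\I+\Lb)^{-1}\Lb - \Lbh(\I+\Lb)^{-1}\Lbh)\leq 1$ demanded by \Cref{thm:fast-sampling-result}, and one for the rejection loop terminating within $\cO(\log\delta^{-1})$ iterations; a union bound then gives overall probability at least $1-\delta$.

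For part (a) I would execute the precomputation in three stages and add the costs. Stage one runs BLESS to draw $m=\cO(k^3\log(n/\delta))$ column indices proportionally to their RLS in time $\cO(nk^2\log^2\!\frac n\delta + k^3\log^4\!\frac n\delta + m)$; \Cref{lem:control-s-k} certifies that the resulting $\Lbh$ satisfies the assumption of \Cref{thm:fast-sampling-result}. Stage two applies \Cref{l:computing} to obtain $l_i$, $s$, $z$, and $\Lbt$ in $\cO(nm^2+m^3) = \cO(nk^6\log^2\!\frac n\delta + k^9\log^3\!\frac n\delta)$. Stage three sets $q=\max\{\sh^2,\sh\}$ and runs the repeat loop in Lines~\ref{line:rep1}–\ref{line:rep2}; by \Cref{thm:fast-sampling-result} each iteration accepts with probability at least $e^{-2}$ independently, so $\cO(\log\delta^{-1})$ iterations suffice with probability $1-\delta/3$, and the per-sample cost of $\cO(k^6+\log^3\!\delta^{-1})$ (dominated by the determinant evaluation and the final eigendecomposition-based $\DPP(\Lbt_\sigma)$ step) gives total loop cost $\cO(k^6\log\delta^{-1}+\log^4\!\delta^{-1})$. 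Correctness of $S_1\sim\DPP(\L)$ follows from \Cref{t:exact}, which holds for any $\Lbh$ and any $q>0$. Summing the three stages and absorbing the subdominant terms (in particular $nk^2\log^2$ into $nk^6\log^2$, and the final rejection cost into the precompute cost) yields the stated bound.

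For part (b), the crucial point is \Cref{l:independent}: holding $\Lbh$ fixed, the outputs of two runs of \dppvfx are (unconditionally) independent, so we may reuse the precomputed quantities $\Lbh, l_i, s, z, \Lbt$ without recomputing them. Thus only the rejection loop is re-executed for $S_2$. Since the condition on $\Lbh$ from \Cref{thm:fast-sampling-result} still holds (it is an event on $\Lbh$, not on the sample drawn), we again get an $e^{-2}$ per-iteration acceptance probability; the same Poisson tail argument inside the proof of \Cref{thm:fast-sampling-result} gives $\cO(k^6\log\delta^{-1}+\log^4\!\delta^{-1})$ total time for $S_2$ with probability $1-\delta$, matching the bound in (b).

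The main obstacle is conceptual rather than technical: ensuring that the \nystrom-dependent precomputation done once is genuinely safe to reuse across samples while retaining \emph{exact} DPP distributions. That is exactly what \Cref{l:independent} delivers — independence holds unconditionally on $C$ — so there is no hidden conditioning pitfall when amortizing the $n\cdot\poly(k)$ cost over many draws. Beyond this, the remaining work is arithmetic: carefully keeping track of the $\log(n/\delta)$ factors inherited from BLESS versus the $\log(1/\delta)$ factors produced by the rejection-loop tail bound, so that the first-sample bound reads $\cO(nk^6\log^2\!\frac n\delta + k^9\log^3\!\frac n\delta + k^3\log^4\!\frac n\delta)$ while the reuse bound reads $\cO(k^6\log\frac1\delta + \log^4\!\frac1\delta)$, which is exactly what the statement claims.
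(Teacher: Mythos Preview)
Your proposal is correct and mirrors the paper's approach exactly: the paper does not give a standalone proof of this theorem but presents it as the direct combination of \Cref{t:exact}, \Cref{l:independent}, \Cref{thm:fast-sampling-result}, \Cref{lem:control-s-k}, the BLESS guarantee, and \Cref{l:computing}, with the arithmetic on $m=\cO(k^3\log(n/\delta))$ plugged into $\cO(nm^2+m^3)$ producing the stated bounds. Your explicit $\delta$-splitting and union bound simply make precise what the paper leaves implicit.
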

\textbf{Discussion.} \ Due to the nature of rejection sampling, as long as we exit the loop,
\ie we accept the sample, the output of
\dppvfx is guaranteed to follow the DPP distribution for any value of $m$.
In Theorem \ref{thm:main} we set $m = (k^3\log\frac n\delta)$
to satisfy \Cref{thm:fast-sampling-result} and guarantee
a constant acceptance probability in the rejection sampling loop,
but this might not be necessary or even desirable in practice.
Experimentally, much smaller values of $m$, starting
from $m = \Omega(k\log\frac n\delta)$ seem to be sufficient
to accept the sample, while at the same time a smaller $m$ greatly reduces
the preprocessing costs.
In general, we recommend to separate \dppvfx in three phases.
First, compute an accurate estimate of the RLS using off-the-shelf algorithms
in $\cO(nk^2\log^2\!\frac n\delta + k^3\log^4\!\frac n\delta)$ time. Then, sample
a small number $m$ of columns to construct an explorative
$\Lbh$, and try to run \dppvfx If the rejection sampling loop does
not terminate sufficiently fast, then we can reuse the RLS
estimates to compute a more accurate $\Lbh$ for a larger $m$.
Using a simple doubling schedule
for $m$, this procedure will quickly reach a regime where
\dppvfx is guaranteed to accept \whp maintaining its asymptotic
complexity, while at the same time resulting in faster
sampling in practice.

% !TEX root = neurips.tex
\section{Reduction from DPPs to k-DPPs}\label{s:k-dpp}
We next show that with a simple extra rejection sampling
step we can efficiently transform \emph{any} exact
DPP sampler into an exact $k$-DPP sampler.

A common heuristic to sample $S$ from a $k$-DPP is to
first sample $S$ from a DPP, and then reject
the sample if the size of $S$ is not exactly $k$.
The success probability of this procedure can be improved by appropriately rescaling
$\Lb$ by a constant factor $\alpha$,
\begin{align*}
  \text{sample}\quad S_{\alpha}\sim \DPP(\alpha\L),\quad\text{accept
  if }|S_{\alpha}|=k. %\label{eq:k-rejection} 
\end{align*}
Note that rescaling the DPP by a constant $\alpha$ only changes the expected size
of the set $S_{\alpha}$, and not its distribution. Therefore,
if we accept only sets with size $k$, we will be sampling exactly from
our $k$-DPP. Moreover, if $k_{\alpha} = \E[|S_\alpha|]$
is close to $k$, the success probability will improve.
With a slight abuse of notation, in the context of $k$-DPPs
we will indicate with $k$ the desired size of $S_{\alpha}$,
\ie the final output size at acceptance,
and with $k_{\alpha} = \E[|S_{\alpha}|]$ the expected size of the
scaled DPP.

While this rejection sampling heuristic is widespread in the literature,
until now there has been no proof that the process
can be provably made efficient.
We solve this open question with two new results. First we show that
for an appropriate rescaling $\alpha^\star$ we only reject $S_{\alpha^\star}$
roughly $O(\sqrt{k})$ times. Then, we show how we can find
such an $\alpha^\star$ with a $\tcO(n\cdot\poly(k))$ time preprocessing step.

\begin{theorem}\label{lem:k-dpp-repeat}
There exists constant $C>0$ such that for any rank $n$ psd matrix $\L$ and
$k\in[n]$, there is $\alpha^\star > 0$ with the following property: if we
sample $S_{\alpha^\star}\sim\DPP(\alpha^\star \L)$, then
%  \begin{align*}
   $ \Pr(|S_{\alpha^\star}|=k)\geq \frac1{C\sqrt{k}}$.
 % \end{align*}
 \end{theorem}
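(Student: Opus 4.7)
My proof plan has two parts: first establish that a suitable rescaling $\alpha^\star$ exists via a continuity/intermediate-value argument, and second bound $\Pr(|S_{\alpha^\star}|=k)$ from below using anti-concentration for sums of independent Bernoullis.

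\textbf{Existence of $\alpha^\star$.} Let $\lambda_1,\dots,\lambda_n>0$ be the eigenvalues of $\L$ (positive since $\L$ has full rank $n$). The classical spectral construction of DPPs (due to Hough--Krishnapur--Peres--Vir\'ag) gives that $|S_\alpha|$ for $S_\alpha\sim\DPP(\alpha\L)$ is distributed as a Poisson binomial: $|S_\alpha|\stackrel{d}{=}\sum_i B_i(\alpha)$, where $B_i(\alpha)\sim\mathrm{Bernoulli}(p_i(\alpha))$ are independent and $p_i(\alpha)=\alpha\lambda_i/(1+\alpha\lambda_i)$. Therefore $k_\alpha\triangleq\E[|S_\alpha|]=\sum_i p_i(\alpha)$ is a continuous, strictly increasing function of $\alpha\in(0,\infty)$ with $\lim_{\alpha\to 0}k_\alpha=0$ and $\lim_{\alpha\to\infty}k_\alpha=n$. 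So for any integer $k\in\{1,\dots,n-1\}$, the intermediate value theorem yields a unique $\alpha^\star$ with $k_{\alpha^\star}=k$; the edge case $k=n$ is trivial since for $\alpha^\star$ sufficiently large, $\Pr(|S_{\alpha^\star}|=n)\to 1$.

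\textbf{Lower bound on the point mass.} Write $p_i=p_i(\alpha^\star)$. The key variance bound uses that $p_i(1-p_i)\leq p_i$, giving
\begin{align*}
\sigma^2 \;\triangleq\; \Var[|S_{\alpha^\star}|]\;=\;\sum_i p_i(1-p_i)\;\leq\;\sum_i p_i\;=\;k.
\end{align*}
Because $|S_{\alpha^\star}|$ is a Poisson binomial with integer mean $k$, its probability mass function is log-concave on $\mathbb{Z}$ (this is a consequence of the Newton inequalities / the fact that DPPs are strongly Rayleigh), and its mode lies exactly at~$k$. I now invoke the following standard anti-concentration fact for integer-valued log-concave distributions: if $X$ is log-concave on $\mathbb{Z}$ with mode $m^\star$ and variance $\sigma^2$, then $\Pr(X=m^\star)\geq c/\sqrt{\sigma^2+1}$ for an absolute constant $c>0$. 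Applying this with $m^\star=k$ and combining with $\sigma^2\leq k$ yields
\begin{align*}
\Pr(|S_{\alpha^\star}|=k) \;\geq\; \frac{c}{\sqrt{\sigma^2+1}}\;\geq\;\frac{c}{\sqrt{k+1}}\;\geq\;\frac{1}{C\sqrt{k}}
\end{align*}
for a suitable absolute $C>0$, as claimed.

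\textbf{Main obstacle.} The nontrivial step is the anti-concentration bound in part two. The clean route is a local central limit theorem for sums of independent (non-identical) Bernoullis: writing the distribution via its characteristic function and applying a saddle-point estimate around $\theta=0$ yields $\Pr(X=k)\approx (2\pi\sigma^2)^{-1/2}$ once $\sigma$ is bounded away from zero. The small-variance regime $\sigma\leq 1$ needs separate treatment, but there the distribution concentrates on $O(1)$ integers around $k$ so log-concavity plus unimodality immediately forces $\Pr(X=k)\geq 1/2$. The remaining delicacy is that the constants must be uniform in the number $n$ and in the individual probabilities $p_i$, but this follows from the fact that the Berry--Esseen-type error in the local CLT for Poisson binomials depends only on $\sigma$.
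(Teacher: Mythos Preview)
Your argument is correct and reaches the same conclusion, but the two proofs distribute the work differently.

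For the existence of $\alpha^\star$, you apply the intermediate value theorem to the \emph{mean} $k_\alpha=\sum_i \alpha\lambda_i/(1+\alpha\lambda_i)$, a smooth strictly increasing function of $\alpha$, and hit $k_{\alpha^\star}=k$ exactly; you then use that a Poisson binomial with integer mean has its mode at the mean. The paper instead tracks the integer-valued \emph{mode} $M_\alpha$ directly: it uses unimodality of the coefficient sequence of the real-rooted polynomial $\prod_i(x+\alpha\lambda_i)$ to argue that as $\alpha$ sweeps $(0,\infty)$ the mode passes through every $k\in[n]$, together with the bound $|M_\alpha-k_\alpha|\le 1$ from the same polynomial literature. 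Your route is tidier on this half.

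For the point-mass lower bound, the paper is more elementary than your sketch: a Chernoff bound confines $|S_\alpha|$ to an interval of length $O(\sqrt{k_\alpha})$ with probability $\ge 1/2$, and pigeonhole on the mode gives $\Pr(|S_\alpha|=M_\alpha)\ge 1/(4C\sqrt{k_\alpha})$ immediately. Your variance bound $\sigma^2\le k$ is actually sharper than the Chernoff input, but you then reach for a local CLT / saddle-point estimate, which is heavier than needed---Chebyshev plus unimodality already gives $\Pr(|S|=k)\ge \tfrac{3/4}{4\sigma+1}$ with no asymptotics and no case split on $\sigma$. One small caution: your parenthetical attributes ``mode lies exactly at $k$'' to the Newton inequalities, but those only yield log-concavity of the pmf, which by itself does not bound $|\text{mode}-\text{mean}|$ (the geometric distribution is log-concave with mode $0$ and arbitrary mean). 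You need either Darroch's theorem for Poisson binomials or the real-rootedness result $|M_\alpha-k_\alpha|\le 1$ that the paper cites; your mention of the strong Rayleigh property does get you there, since it forces the univariate generating polynomial to be real-rooted.
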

 The proof (see \Cref{a:k-dpp}) relies on a known Chernoff bound for
 the sample size $|S_{\alpha}|$ of a DPP. When applied na\"ively, the
inequality does not offer a lower bound on the probability of any single
 sample size, however we can show that the probability mass is
 concentrated on $O(\sqrt{k_{\alpha}})$ sizes. This leads to a
 lower bound on the sample size with the largest probability, \ie the mode of the
 distribution. Then, it remains to observe 
 that for any $k\in[n]$ we can always find $\alpha^\star$ for which
 $k$ is the mode of $|S_{\alpha^{\star}}|$.
% \Cref{lem:k-dpp-repeat}, proven in \Cref{a:k-dpp} shows
% that if we correctly rescale the DPP by a factor $\alpha^\star,$
% we  force the acceptance probability to be roughly $\Omega(1/\sqrt{k})$,
% which implies that
%  \whp we will accept after only
% $\tcO(\sqrt{k})$ iterations. Therefore,
We conclude that given $\alpha^\star$, the rejection sampling scheme
described above transforms  any $\poly(k)$
time DPP sampler into a $\poly(k)$ time $k$-DPP sampler.
It remains to find $\alpha^\star$ efficiently, which once again relies
on using a \nystrom approximation of $\L$.
%\footnote{We focus here on the case
%$k \leq k_{1}$, \ie on the case where $\alpha^\star \leq 1$ and the original
%DPP naturally tends to sample more than $k$ elements. Similar reasoning can
%be made for $k > k_{1}$, but more care has to be taken for the ill-posed case
%of $k_{1} \to 0$, which would force $\alpha^\star \to \infty$ and complicate the proof.}.
\begin{lemma}
If $k \geq 1$, then there is an algorithm that finds $\alpha^\star$ in
$\tcO(n\cdot\poly(k))$ time.
\end{lemma}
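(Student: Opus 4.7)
The strategy is to reduce the search for $\alpha^\star$ to a one-dimensional monotone search on the scalar function $f(\alpha)\triangleq\tr(\alpha\L(\I+\alpha\L)^{-1})=\E[|S_\alpha|]$. From the proof of \Cref{lem:k-dpp-repeat}, $k$ lies near the mode of $|S_{\alpha}|$ whenever $f(\alpha)$ is within $\cO(\sqrt{k})$ of $k$, in which case the Chernoff concentration still guarantees $\Pr(|S_\alpha|=k)\geq 1/(C'\sqrt{k})$ for a slightly larger constant~$C'$. Thus it suffices to locate any $\alpha$ with $|f(\alpha)-k|=\cO(\sqrt{k})$; since $f$ is strictly increasing from $0$ to $\rank(\L)\geq k$ on $[0,\infty)$, such an $\alpha$ exists and can in principle be found by bisection.

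To avoid the $\cO(n^3)$ cost of evaluating $f$ exactly, I would work instead with the surrogate $\hat f(\alpha)\triangleq\tr(\alpha\Lbh(\I+\alpha\Lbh)^{-1})$ obtained from a Nystr\"om approximation $\Lbh$ of $\L$. First, construct $\Lbh$ by RLS-sampling $m=\poly(k)\,\polylog(n/\delta)$ columns via the \textsc{BLESS} procedure cited above in $\tcO(n\cdot\poly(k))$ time. Second, compute the $m$ nonzero eigenvalues $\hat\lambda_1,\dots,\hat\lambda_m$ of the rank-$m$ matrix $\Lbh$ by one eigendecomposition in $\cO(m^3)=\poly(k)$ time; this reduces $\hat f$ to $\sum_i\alpha\hat\lambda_i/(1+\alpha\hat\lambda_i)$, which is evaluable in $\cO(m)$ time and still strictly monotone in $\alpha$. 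Third, binary-search over a range $[\alpha_{\min},\alpha_{\max}]$ whose endpoints can be read off from $\max_i\hat\lambda_i$ and $\min_i\hat\lambda_i$ (which sandwich the nontrivial scale of $\hat f$), producing $\hat\alpha^\star$ with $\hat f(\hat\alpha^\star)=k$ in $\cO(\log(n/k))$ queries. The total runtime is dominated by the one-time Nystr\"om construction.

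The hard part is showing that the $\hat\alpha^\star$ returned by the surrogate search actually satisfies $|f(\hat\alpha^\star)-k|=\cO(\sqrt{k})$. My plan is to take $m$ polynomial in $k$ large enough that the RLS-based Nystr\"om approximation satisfies a spectral-trace bound of the same flavor as \Cref{lem:control-s-k}, but applied to the rescaled kernel $\alpha\L$ rather than $\L$ itself. Because the binary search only visits $\cO(\log(n/k))$ distinct values of $\alpha$, a union bound across these (together with the rescaling $\L\mapsto\alpha\L$) transfers the unit-scale guarantee to all of them simultaneously and yields a uniform $\cO(1)$ additive error $|f(\alpha)-\hat f(\alpha)|$ on the queried range. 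This $\cO(1)$ margin sits comfortably inside the $\cO(\sqrt{k})$ slack permitted by the concentration argument, so $\hat\alpha^\star$ is a valid choice of $\alpha^\star$ and the overall cost is $\tcO(n\cdot\poly(k))$ as claimed.
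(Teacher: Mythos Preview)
Your proposal has a genuine gap at the central step. You claim that once $|f(\alpha)-k|=\cO(\sqrt{k})$ (or even $\cO(1)$), ``the Chernoff concentration still guarantees $\Pr(|S_\alpha|=k)\geq 1/(C'\sqrt{k})$.'' This does not follow from the proof of \Cref{lem:k-dpp-repeat}. That proof uses the Chernoff bound only to show that \emph{some} window of width $\cO(\sqrt{k_\alpha})$ around $k_\alpha$ carries constant mass, and then pigeonholes to conclude that the \emph{mode} $M_\alpha$ has probability $\geq 1/(4C\sqrt{k_\alpha})$. Nothing is said about any other point in the window. To see that being within $\cO(1)$ of the mean is insufficient, take $\L$ with $k$ huge eigenvalues and $n-k$ tiny ones: at $\alpha=1$ we have $k_\alpha\approx k$, yet $\Pr(|S_\alpha|=k+1)$ can be made arbitrarily small. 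So targeting a $(k{+}1)$-DPP with your accuracy would fail even though $|f(1)-(k{+}1)|\approx 1$.

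The paper closes exactly this gap: it invokes Darroch's theorem for Poisson binomials, which says $M_\alpha=k$ whenever $k_\alpha\in[k,\,k+\tfrac{1}{k+2})$. That forces the search to land $k_\alpha$ in a window of width $\cO(1/k)$, not $\cO(1)$ or $\cO(\sqrt{k})$, and this is what drives the accuracy requirement on the \nystrom surrogate (relative error $\varepsilon\approx 1/k^2$, hence $m=\tcO(k^5)$). Your binary search and eigenvalue computation are fine as mechanics, but you must (i) aim for this much tighter target, and (ii) use the corrected surrogate $s_\alpha=\tr(\alpha(\L-\Lbh))+\tr(\Lbh(\Lbh+\I/\alpha)^{-1})$ rather than $\hat f(\alpha)=\tr(\alpha\Lbh(\I+\alpha\Lbh)^{-1})$, since the guarantee of \Cref{lem:control-s-k} is for $s$, not for $\hat f$. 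A secondary loose end is your ``union bound across $\alpha$'' for transferring the RLS-based \nystrom guarantee: the RLS change with the regularization scale, so sampling once at $\lambda=1$ does not automatically certify accuracy at every $\alpha$ you query; in the paper this is handled by sizing $m$ against $k_{\alpha^\star}$, which they then bound by $\tfrac54 k$.
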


% !TEX root = neurips.tex

\noindent\begin{minipage}{0.48\textwidth}
  \centering
  \includegraphics[height=4.7cm]{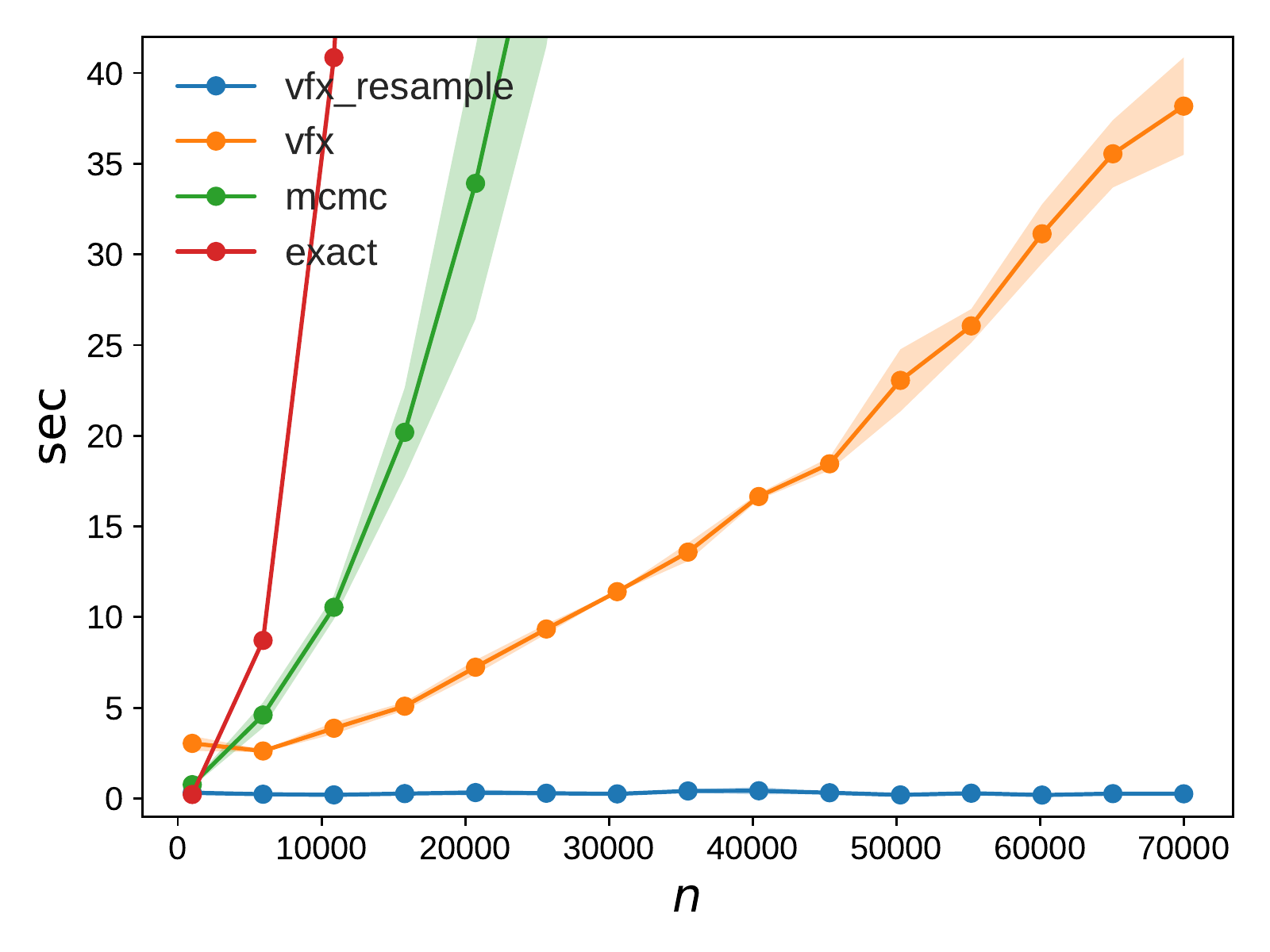}
  \vspace{-3mm}
  \captionof{figure}{\emph{First-sample} cost for \dppvfx against other
    DPP samplers ($n$ is data size).}\label{fig:exp}
\end{minipage}\hfill
\begin{minipage}{0.48\textwidth}
  \centering
  % \begin{figure}
  \includegraphics[height=4.7cm]{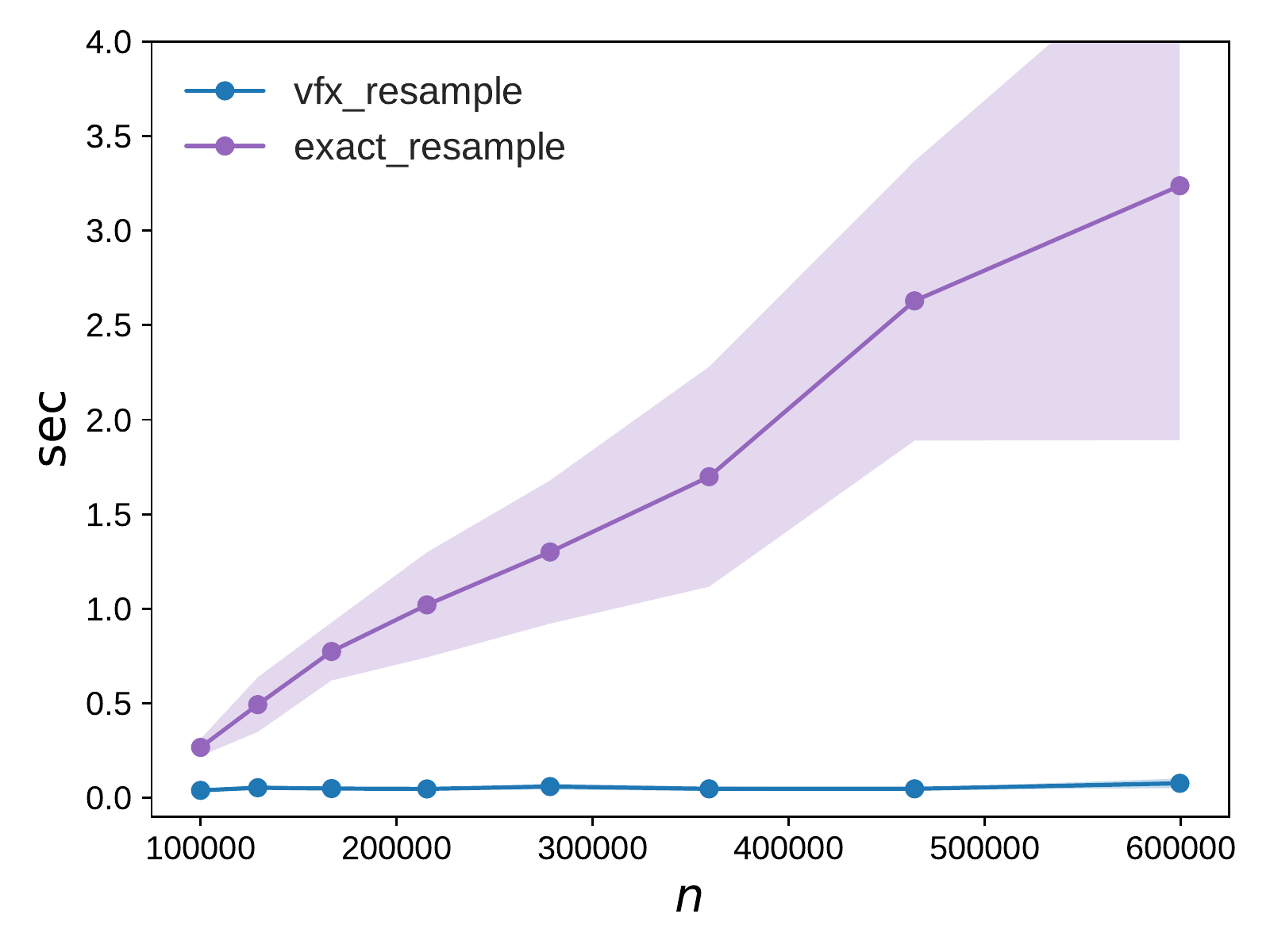}
  \vspace{-3mm}
  \captionof{figure}{\emph{Resampling} cost for \dppvfx compared to the exact
  sampler of \cite{dpp-independence}.}\label{fig:resample}
\end{minipage}
\section{Experiments}
In this section, we experimentally evaluate the performance of \dppvfx compared to exact sampling \cite{dpp-independence} and MCMC-based approaches \cite{rayleigh-mcmc}.
In particular, since \Cref{s:exact} proves that \dppvfx
samples exactly from the DPP, we are interested in evaluating
computational performance. This will be characterized by showing
how \dppvfx and baselines scale with the size $n$ of the matrix $\Lb$
when taking a first sample, and how \dppvfx achieves constant time
when resampling.

To construct $\Lb,$ we use random subsets of the infinite MNIST digits dataset
\cite{loosli-canu-bottou-2006},
where $n$ varies up to $10^6$ and $d = 784$. We use
an RBF kernel with $\sigma = \sqrt{3d}$ to construct $\Lb$.
All algorithms are implemented in \path{python}. For exact and MCMC sampling we used the \path{DPPy} library, \cite{gautier2018dppy}, while for \dppvfx
we reimplemented BLESS \cite{NIPS2018_7810}, and used \path{DPPy} to
perform exact sampling on the intermediate subset. All experiments
are carried out on a 24-core CPU and fully take advantage of potential parallelization.
For the \nystrom approximation
we set $m = 10\deff(1) \approx 10k$. While this is much lower than
the $\cO(k^3)$ value suggested by the theory, as we will see it is already accurate enough to result in drastic runtime improvements over exact and MCMC.
For each algorithm we control\footnote{For simplicity we do not perform the full $k$-DPP rejection step, but only adjust the expected size of the set.} the size of the output set by rescaling
the input matrix $\Lb$ by a constant, following the strategy
of \Cref{s:k-dpp}.
In \Cref{fig:exp} we report our results, means and 95\% confidence interval over 10 runs, for subsets of MNIST that go from $n =
10^3$ to $n=7\cdot10^4$, \ie the whole original MNIST dataset.

Exact sampling is clearly cubic in $n$, and we cannot push our
sampling beyond $n = 1.5\cdot10^4$. For MCMC, we enforce mixing by runnning the chain
for $nk$ steps, the minimum recommended by
\cite{rayleigh-mcmc}. However, for $n=7\cdot10^5$ 
the MCMC runtime is $358$ seconds and cannot be included in the plot,
while \dppvfx completes in $38$ seconds, an order of magnitude faster.
Moreover, \dppvfx rarely rejects more than 10 times, and the mode of
the rejections up to $n=7\cdot10^5$ is $1$, that is we mostly accept at the
first iteration.
\Cref{fig:resample} reports the cost of the second sample, \ie of resampling. For exact
sampling, this means that an eigendecomposition of $\Lb$ is already available,
but as the plot shows the resampling process still scales with $n$.
On the other hand, \dppvfx's complexity (after preprocessing) scales only with $k$
and remains constant regardless of $n$.

Finally, we scaled \dppvfx to $n = 10^6$ points, a regime where
neither exact nor MCMC approaches are feasible. We report runtime and
average rejections, with mean and 95\% confidence interval over 5
runs. \dppvfx draws its first sample
in $68.479 \pm 2.63$ seconds, with only $9\pm4.5$ rejections.
%for both a CPU variant of the algorithm, as well as a variant
%that makes limited use of a GPU for matrix 
%multiplication\footnote{The Cholesky decomposition provided by \cite{cupy} proved
%too numerically unstable for this scale, and forced us to switch to CPU based approaches
%for matrix inversions and linear system solving.}.

%\Red{I don't think we need that last table.}
%\begin{center}
%\begin{tabular}{ccc}
%Algorithm & Time & Rejections\\
              %\dppvfx on CPU &$68.479 \pm 2.63$ & $9\pm4.5$\\
              %\dppvfx on GPU &$61.919 \pm 2.69$ & $8\pm6.3$
              %\end{tabular}
              %\end{center}

%! TEX root = neurips.tex

\subsubsection*{Acknowledgements}
MD thanks the NSF for funding via the NSF TRIPODS program.

\bibliographystyle{alpha}
\bibliography{pap}

\newpage
\appendix
\section{Omitted proofs for the main algorithm}\label{a:proofs}
In this section we present the proofs omitted from Sections \ref{s:exact} and \ref{s:fast}, which regarded the correctness and efficiency of \dppvfx. We start by showing that multiple samples drawn using the same \nystrom approximation are independent.
\begin{lemma}[restated \Cref{l:independent}]
Let $C\subseteq [n]$ be a random set variable with any
distribution. Suppose that  $S_1$ and $S_2$ are returned by two executions of \dppvfx, both using inputs constructed from the same $\L$
  and $\Lbh=\L_{\Ic,C}\L_C^+\L_{C,\Ic}$. Then $S_1$ and $S_2$ are
  (unconditionally) independent. 
\end{lemma}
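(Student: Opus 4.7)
The plan is to exploit the fact that Theorem~\ref{t:exact} shows the output distribution of \dppvfx is exactly $\DPP(\L)$ no matter which Nystr\"om approximation $\Lbh$ we feed in. Since this output distribution does not depend on $\Lbh$, conditioning on the random set $C$ that defines $\Lbh$ simply hides this extra randomness without affecting anything about $S_1$ or $S_2$. The independence will then drop out essentially for free by conditioning on $C$ and marginalizing.

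More concretely, I would proceed as follows. First, note that once $C$ is fixed, all inputs to \dppvfx (namely $\Lbh$, the approximate leverage scores $l_i$, $\sh$, $\st$, and $\Lbt$) are deterministic functions of $\L$ and $C$. The two invocations of \dppvfx producing $S_1$ and $S_2$ use independent internal random bits (the Poisson draws, the i.i.d.~$\sigma_j$'s, the Bernoulli accept/reject decisions, and the final $\DPP(\Lbt_\sigma)$ sample). Hence, conditional on $C$, the pair $(S_1,S_2)$ consists of two conditionally independent draws from the algorithm with inputs $(\L,\Lbh(C),q)$.

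Next, I invoke Theorem~\ref{t:exact}: for any fixed valid $\Lbh$ and any $q>0$, the algorithm returns a sample distributed exactly as $\DPP(\L)$. Applying this with $\Lbh = \L_{\Ic,C}\L_C^+\L_{C,\Ic}$ for each realization of $C$, we conclude that for every measurable $A_1, A_2 \subseteq 2^{[n]}$,
\begin{align*}
\Pr(S_1 \in A_1,\; S_2 \in A_2 \mid C) \;=\; \Pr_{\DPP(\L)}(A_1)\,\Pr_{\DPP(\L)}(A_2).
\end{align*}
The right-hand side does not depend on $C$, so taking expectation over $C$ yields
\begin{align*}
\Pr(S_1 \in A_1,\; S_2 \in A_2) \;=\; \Pr_{\DPP(\L)}(A_1)\,\Pr_{\DPP(\L)}(A_2) \;=\; \Pr(S_1 \in A_1)\,\Pr(S_2 \in A_2),
\end{align*}
which is exactly the claim of unconditional independence.

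There is no real obstacle here beyond being careful to distinguish the randomness in $C$ from the internal algorithmic randomness of each run, and to verify that the conditional statement from Theorem~\ref{t:exact} applies pointwise for every realization of $C$ (which it does, because the theorem makes no assumption on how $\Lbh$ was generated). The cleanest way to write this is probably to introduce notation for the internal randomness $\omega_1,\omega_2$ of the two calls and observe that $(C,\omega_1,\omega_2)$ are mutually independent by construction.
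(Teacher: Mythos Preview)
Your proposal is correct and follows essentially the same approach as the paper: condition on $C$, use Theorem~\ref{t:exact} together with the independence of the two runs' internal randomness to see that the conditional joint law of $(S_1,S_2)$ is $\DPP(\L)\otimes\DPP(\L)$ regardless of $C$, then marginalize. The paper writes this out with elementary events $A_1,A_2\subseteq[n]$ and an explicit sum over realizations of $C$, but the argument is identical.
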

\begin{proof}
% Let $\sigma_{S_1}, \sigma_{S_2},\sigma_{C}\subseteq \Sigma$ be the
% sub-$\sigma$-algebras generated by variables $S_1$, $S_2$ and
% $C$.
Let $A$ and $B$ be two subsets of $[n]$ representing elementary events for
$S_1$ and $C$, respectively. Theorem \ref{t:exact} implies that
\begin{align*}
  \Pr(S_1\!=\!A \mid C\!=\!B) = \frac{\det(\L_A)}{\det(\I+\L)} = \Pr(S_1\!=\!A).
\end{align*}
% Since this holds for all elementary events and the probability space is
% discrete, we have now established that $\sigma_{S_1}$ and $\sigma_{C}$ are
% independent, and similarly, it follows that $\sigma_{S_2}$ and
% $\sigma_C$ are independent. 
Now, for any $A_1,A_2\subseteq[n]$ representing elementary events for
$S_1$ and $S_2$ we have that
\begin{align*}
  \Pr(S_1\!=\!A_1\wedge S_2\!=\!A_2) &= \sum_{B\in[n]}
  \Pr(S_1\!=\!A_1\wedge S_2\!=\!A_2\mid C\!=\!B)\,\Pr(C\!=\!B)
\\ & =\sum_{B\in[n]}\Pr(S_1\!=\!A_1\mid C\!=\!B)\Pr(S_2\!=\!A_2\mid
     C\!=\!B)\,\Pr(C\!=\!B)
\\ &=\Pr(S_1\!=\!A_1)\Pr(S_2\!=\!A_2)\sum_{B\in[n]}\Pr(C\!=\!B).
\end{align*}
Since $\sum_{B\in[n]}\Pr(C\!=\!B)=1$, we get that $S_1$ and $S_2$ are independent.
\end{proof}
We now bound the precompute cost, starting with the construction of the \nystrom approximation $\Lbh$.
\begin{lemma}[restated \Cref{lem:control-s-k}]
Let $\Lbh$ be constructed by sampling $m = \cO(k^3\log\frac n\delta)$
columns proportionally to their RLS. Then, with probability $1 - \delta$,
$\Lbh$ satisfies the assumption of \Cref{thm:fast-sampling-result}.
\end{lemma}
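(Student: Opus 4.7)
The plan is to combine a standard spectral guarantee for RLS-based Nystr\"om sampling with the algebraic identities already used in the proof of \Cref{t:exact}. I would invoke a known result on RLS-based Nystr\"om approximation (\eg from \cite{NIPS2018_7810}) stating that sampling $m$ columns proportionally to $\lambda$-ridge leverage scores produces, with probability at least $1-\delta$, a Nystr\"om approximation $\Lbh$ satisfying the spectral bound $\L - \Lbh \preceq \epsilon(\L + \lambda\I)$, using $m = \cO(\deff(\lambda)\log(n/\delta)/\epsilon^2)$ columns. Taking $\lambda = 1$ (so that $\deff(1) = k$) and $\epsilon = \Theta(1/k)$ gives the claimed sample count $m = \cO(k^3\log(n/\delta))$.

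To verify the trace inequality, set $\Delta \triangleq \L - \Lbh \succeq 0$ and $M \triangleq (\I+\L)^{-1}$, and substitute $\Lbh = \L - \Delta$ to obtain the identity
\[
\tr(\L M \L - \Lbh M \Lbh) \;=\; 2\tr(\Delta M \L) - \tr(\Delta M \Delta) \;\leq\; 2\tr(\Delta M \L),
\]
since $\tr(\Delta M \Delta) \geq 0$. Writing $\L = \B\B^\top$ and $\Lbh = \B\Pi\B^\top$ with $\Pi$ an orthogonal projection (as in the proof of \Cref{t:exact}), setting $K = \B^\top\B$, and invoking the push-through identity $\B^\top(\I + \B\B^\top)^{-1}\B = K(\I+K)^{-1}$, the cross term reduces to a trace involving only $K$ and the complementary projection $\I - \Pi$. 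The RLS guarantee, pushed through $\B$, yields the compression bound $(\I - \Pi)K(\I+K)^{-1}(\I - \Pi) \preceq \epsilon(\I - \Pi)$. Combining this with the identity $\tr(K(\I+K)^{-1}) = k$ and standard cyclic trace manipulations bounds $\tr(\Delta M \L)$ by $\cO(\epsilon k)$. With $\epsilon = \Theta(1/k)$ the required inequality $\tr(\L M \L - \Lbh M \Lbh) \leq 1$ follows.

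The main obstacle is the last trace bound. A naive use of $\Delta \preceq \epsilon(\L + \I)$ applied directly in $\tr(\Delta M \L)$ only yields $\epsilon\,\tr(\L)$, which is far too weak because $\tr(\L)$ is not controlled by $k$ in general (consider matrices with many eigenvalues much larger than $1$, where $k = \sum_i \lambda_i/(1+\lambda_i)$ and $\tr(\L) = \sum_i \lambda_i$ can differ by an arbitrary factor). The delicate argument must exploit the Nystr\"om structure so that the residual $\Delta = \B(\I - \Pi)\B^\top$ acts only on the subspace orthogonal to the sampled columns; on that subspace, RLS sampling controls the residual precisely in terms of the \emph{effective} dimension $\deff(1) = k$ rather than the ambient $\tr(\L)$. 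This tension between $\tr(\L)$ and $k$ is what forces the sample count to scale as $\cO(k^3)$ rather than $\cO(k^2)$.
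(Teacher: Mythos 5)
Your overall plan is sound and close in spirit to the paper's (RLS sampling at level $\lambda=1$ with accuracy $\epsilon=\Theta(1/k)$, hence $m=\cO(k^3\log\frac n\delta)$, plus exploiting the structure $\Lbh=\B\P\B^\top$), and your reduction $\tr(\L M\L-\Lbh M\Lbh)=2\tr(\Delta M\L)-\tr(\Delta M\Delta)\leq 2\tr(\Delta M\L)$ is correct. The gap is in the step you leave to ``standard cyclic trace manipulations.'' Writing $K=\B^\top\B$ and $\Q=\I-\P$, the push-through identity gives $\tr(\Delta M\L)=\tr\big(\Q K(\I+K)^{-1}K\Q\big)$, but the bound you propose to carry forward, $\Q K(\I+K)^{-1}\Q\preceq\epsilon\,\Q$, controls only the compression of the \emph{smoothed} matrix $K(\I+K)^{-1}$, and that is not enough to conclude $\tr(\Delta M\L)=\cO(\epsilon k)$. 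Indeed, if $K$ has an eigenvalue $\Lambda\gg1$ with eigenvector $u$ and the projection leaks $\|\Q u\|^2\approx\epsilon$, the smoothed compression bound can still hold, while $\tr\big(\Q K(\I+K)^{-1}K\Q\big)\geq\frac{\Lambda^2}{1+\Lambda}\|\Q u\|^2\approx\epsilon\Lambda$, which is unbounded relative to $\epsilon k$. So no trace gymnastics can close the argument from the facts you state; likewise your premise $\L-\Lbh\preceq\epsilon(\L+\I)$ alone only yields $\epsilon\,\tr(\L)$, as you yourself observe.

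The missing ingredient is the \emph{unsmoothed} residual bound, which is exactly what RLS sampling provides: from the two-sided equivalence $(1-\epsilon)(K+\I)\preceq\P K\P+\I\preceq(1+\epsilon)(K+\I)$ (the paper's Proposition~\ref{prop:proj-A-accurate}, applied with $m=\cO(k\log(\frac n\delta)/\epsilon^2)$), testing the left inequality on vectors orthogonal to the sampled rows gives $\Q K\Q\preceq\frac{\epsilon}{1-\epsilon}\Q$, i.e.\ $\|\L-\Lbh\|\leq\frac{\epsilon}{1-\epsilon}$. With this in hand your route closes: letting $F=K(\I+K)^{-1}$ commute with $K^{1/2}$,
\begin{align*}
\tr\big(\Q K(\I+K)^{-1}K\Q\big)=\tr\big(F\,K^{1/2}\Q K^{1/2}\big)\leq\big\|\Q K\Q\big\|\,\tr(F)\leq\tfrac{\epsilon}{1-\epsilon}\,k,
\end{align*}
so $\epsilon=\Theta(1/k)$ gives $2\tr(\Delta M\L)\leq1$ and the claimed $m=\cO(k^3\log\frac n\delta)$. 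Once repaired this way, your argument is a legitimately different decomposition from the paper's: the paper never isolates $\Delta$, but instead splits the exponent from the proof of \Cref{thm:fast-sampling-result} into the two terms $s-k$ and $\tr\big(\B\P(\I+K)^{-1}\P\B^\top\big)-z$ and bounds each by $1/2$ directly from the two-sided equivalence with $\varepsilon=1/(2k+2)$; both roads use the same sampling guarantee and the same $\epsilon\sim1/k$ scaling, so neither improves the $k^3$ dependence.
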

\begin{proof}
Let $\L=\B\B^\top$ and $\Lbh=\B\P\B^\top$ (where $\P$ is a projection
matrix).  Using algebraic manipulation, we can write 
\begin{align*}
s = \tr(\L-\Lbh+\Lbh(\I+\Lbh)^{-1}) = \tr(\B(\P\B^\transp\B\P + \I)^{-1}\B).
\end{align*}
The $\P\B^\transp\B\P$ matrix in the above expression been recently analyzed by \cite{calandriello_2019_coltgpucb}
in the context of RLS sampling who gave the following result that we use in the proof.
\begin{proposition}[{\citealp[Lemma~6]{calandriello_2019_coltgpucb}}]\label{prop:proj-A-accurate}
Let the projection matrix $\P$ be constructed by sampling $\cO(k\log(\frac n\delta)/\varepsilon^2)$ columns
proportionally to their RLS. Then,
\begin{align*}
(1-\varepsilon)(\B^\transp\B + \I)
\preceq \P\B^\transp\B\P + \I \preceq
(1+\varepsilon)(\B^\transp\B + \I).
\end{align*}
\end{proposition}
We will decompose the condition into two parts.
First we bound $k - s \leq 1/2$, and then
bound $z - \tr\big(\B\P(\I+\B^\top\B)^{-1}\P\B^\top\big) \leq 1/2$
(see the proof of \Cref{thm:fast-sampling-result} for more details).
It is easy to see that applying the construction from
\Cref{prop:proj-A-accurate} leads to the following bound on $s$,
\begin{align*}
s = \tr(\B(\P\B^\transp\B\P + \I)^{-1}\B)
\leq 
\frac{1}{1-\varepsilon}\tr(\B(\B^\transp\B + \I)^{-1}\B)
=\frac{1}{1-\varepsilon}k
=k + \frac{\varepsilon}{1-\varepsilon}k.
\end{align*}
Tuning $\varepsilon = 1/(2k+2)$ we obtain $s \leq k + 1/2$ and reordering gives us the desired accuracy result.
Similarly, we can invert the bound of \Cref{prop:proj-A-accurate} to obtain
\begin{align*}
(\B^\transp\B + \I)^{-1}
\preceq (1 +\varepsilon)(\P\B^\transp\B\P + \I)^{-1}
\end{align*}
and therefore
\begin{align*}
\tr\big(\B\P(\I+\B^\top\B)^{-1}\P\B^\top\big)
\leq (1+\varepsilon)
\tr\big(\B\P(\I+\P\B^\top\B\P)^{-1}\P\B^\top\big)
= (1 + \varepsilon)z
\leq (1+\varepsilon)k,
\end{align*}
where the last inequality is due to the fact that
$\Lbh \preceq \Lb$ since it is a \nystrom approximation
and that the operator $\tr\big(\Lb(\Lb + \I)^{-1}\big)$
is monotone. With the same $\varepsilon$ as before, we obtain
the bound. Summing the two $1/2$ bounds gives us the result.
\end{proof}
Finally, we show how to compute the remaining quantities needed for \dppvfx (\Cref{alg:main}).
\begin{lemma}[restated \Cref{l:computing}]
Given $\Lb$ and an arbitrary \nystrom approximation $\Lbh$ of rank $\Csize$, computing $l_i$, $s$, $z$, and $\Lbt$ requires $\cO(n\Csize^2 + \Csize^3)$ time.
\end{lemma}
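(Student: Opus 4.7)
The plan is to exploit the fact that any rank-$m$ Nystr\"om approximation admits a factorization $\Lbh=\F^\top\F$ for some $m\times n$ matrix $\F$, and to route every one of the four computations through this thin factor. Concretely, starting from $\Lbh=\L_{\Ic,C}\L_C^+\L_{C,\Ic}$, I would form $\F = (\L_C^+)^{1/2}\L_{C,\Ic}$. This requires one eigendecomposition of the $m\times m$ matrix $\L_C$ in $\OO(m^3)$ time, followed by multiplying the resulting $m\times m$ root against the $m\times n$ block $\L_{C,\Ic}$ in $\OO(nm^2)$ time; total $\OO(nm^2+m^3)$.

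Next, I would compute the $m\times m$ Gram matrix $\G = \F\F^\top$ and its regularized inverse $(\I+\G)^{-1}$, each in $\OO(nm^2)$ and $\OO(m^3)$ time respectively. With these in hand, the three scalar-valued quantities fall out easily. For each $i\in[n]$, the identity $\Lbh(\I+\Lbh)^{-1} = \F^\top(\I+\G)^{-1}\F$ gives $[\Lbh(\I+\Lbh)^{-1}]_{ii} = \f_i^\top(\I+\G)^{-1}\f_i$, where $\f_i$ is the $i$th column of $\F$. Evaluating this quadratic form at every $i$ costs $\OO(m^2)$ per index and $\OO(nm^2)$ total. Combined with the trivial $\OO(n)$ read of the diagonal of $\L$ and the analogous $\|\f_i\|^2$ formula for $[\Lbh]_{ii}$, this yields all $l_i = [\L-\Lbh]_{ii} + [\Lbh(\I+\Lbh)^{-1}]_{ii}$ in $\OO(nm^2)$ time, and then $\sh=\sum_i l_i$ in an additional $\OO(n)$ sweep. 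For $\st$, using the cyclic property $\tr(\Lbh(\I+\Lbh)^{-1}) = \tr(\G(\I+\G)^{-1}) = m - \tr((\I+\G)^{-1})$ reduces it to an $\OO(m)$ computation once $(\I+\G)^{-1}$ is available.

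The only remaining item is $\Lbt$, and the crucial observation is that we never materialize its $n\times n$ entries. Since $\Lt_{ij}=\tfrac{\sh}{q\sqrt{l_il_j}}L_{ij}$, we simply store the vector $(l_1,\dots,l_n)$ and the scalars $\sh,q$; any submatrix $\Lbt_\sigma$ later requested in Line~\ref{line:iid}--\ref{line:acc} or Line~\ref{line:sub} can be produced in $\OO(t^2)$ time by pulling the corresponding entries of $\L$ and rescaling. Summing the contributions gives the claimed $\OO(nm^2+m^3)$ bound.

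The potentially subtle point, and the one I would be most careful about, is the assumption on how $\Lbh$ is presented: if we are handed the set $C$ and oracle access to $\L$, then the factorization step above is immediate, but if $\Lbh$ is given in some other form, we must verify that a thin factor can still be extracted within the stated budget. Under the natural interpretation consistent with the rest of the paper (namely, $\Lbh$ is specified by $C$ through the RLS-sampling preprocessing), no issue arises, and combining the four substeps above completes the proof.
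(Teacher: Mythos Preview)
Your proof is correct and essentially identical to the paper's: both form the thin factor $\Bb = \L_{\Ic,C}\L_C^{+/2}$ (your $\F^\top$), pass to the $m\times m$ Gram matrix $\Bb^\top\Bb$, and use the push-through identity $\Lbh(\I+\Lbh)^{-1} = \Bb(\I+\Bb^\top\Bb)^{-1}\Bb^\top$ to reduce each $l_i$ to an $m$-dimensional quadratic form. The only cosmetic difference is that the paper computes $\st$ from the eigenvalues of $\Bb^\top\Bb$ (which it then also recycles for $\log\det(\I+\Lbh)$), whereas you use the trace identity $m - \tr((\I+\G)^{-1})$ directly.
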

\begin{proof}
Given the \nystrom set $C$, let us define the matrix $\Bb \triangleq \Lb_{\mathcal{I},C}\L_{C}^{+/2} \in \R^{n \times \Csize}$ such that $\Lbh = \Bb\Bb^\transp$. We also introduce $\Lbh_{\Csize} \triangleq \Bb^\transp\Bb$ to act as a $\R^{\Csize \times \Csize}$ counterpart to $\Lbh$. Denote with $\e_i$ the $i$-th indicator vector. Then,
exploiting the fact that $\Bb\Bb^\transp(\I+\Bb\Bb^\transp)^{-1} = \Bb(\I+\Bb^\transp\Bb)^{-1}\Bb^\transp$
for any matrix, we can compute $l_i$
as
\begin{align*}
l_i &= [\Lb - \Lbh + \Bb\Bb^\transp(\I+\Bb\Bb^\top)^{-1}]_{ii}
%&= [\Lb - \Lbh + \Bb^\transp(\I+\Bb\Bb^\transp)^{-1}\Bb]_{ii}\\
%= [\Lb - \Lbh]_{ii} + \normsmall{(\I+\Bb\Bb^\transp)^{-1/2}\Bb\e_{i}}_2^2
= [\Lb - \Lbh]_{ii} + \normsmall{(\I+\Lbh_{\Csize})^{-1/2}\Bb^\top\e_{i}}_2^2.
\end{align*}
Computationally, this means that we first need to compute $\Bb$, which takes
$\cO(\Csize^3)$ time to compute $\L_{C,C}^{+/2}$, and $\cO(n\Csize^2)$ time for the matrix multiplication.
Then, $[\Lbh]_{ii}$ is the $\ell_2$ norm of the $i$-th row of $\Bb$ which can be computed
in $n\Csize$ time. Similarly, $\normsmall{(\I+\Lbh_{\Csize})^{-1/2}\Bb^\transp\e_i}_2^2$
requires $\cO(\Csize^3 + n\Csize^2)$ time.
To compute $\sh,$ we simply sum $l_i$, while to compute $\st$ we first
compute the eigenvalues of $\Lbh_{\Csize}$,
$a_i=\lambda(\Lbh_{\Csize})_{i}$, in $\cO(\Csize^3)$ time,
and then compute $\st = \sum_i a_i/(a_i + 1)$.
We can also recycle the eigenvalues to precompute $\log\det(\I + \Lbh) = \log\det(\I + \Lbh_{m}) = \sum_{i} \log(a_i + 1)$.
\end{proof}

\section{Omitted proofs for the reduction to k-DPPs}\label{a:k-dpp}
In this section we present the proofs omitted from Section \ref{s:k-dpp}. Recall that our approach is based on the following rejection sampling strategy:
\begin{align*}
  \text{sample}\quad S_{\alpha}\sim \DPP(\alpha\L),\quad\text{accept
  if }|S_{\alpha}|=k. %\label{eq:k-rejection} 
\end{align*}
First, we show the existence of the factor $\alpha^{\star}$ for which the rejection sampling is efficient.
\begin{theorem}[restated \Cref{lem:k-dpp-repeat}]
There exists constant $C>0$ such that for any rank $n$ PSD matrix $\L$ and
$k\in[n]$, there is $\alpha^\star > 0$ with the following property: if we
sample $S_{\alpha^\star}\sim\DPP(\alpha^\star \L)$, then
  \begin{align}
    \Pr(|S_{\alpha^\star}|=k)\geq \frac1{C\sqrt{k}}\cdot\label{eq:sup}
  \end{align}
\end{theorem}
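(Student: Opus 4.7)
The plan is to follow the author's hint: first use log-concavity to locate an $\alpha^\star$ for which $k$ is the \emph{mode} of the size distribution of $|S_{\alpha^\star}|$, and then lower bound the mode's probability by $\Omega(1/\sqrt{k})$ via Chebyshev concentration plus pigeonhole.

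\textbf{Mode identification.} A standard DPP fact is that $|S_\alpha|$ has the same law as $\sum_{i=1}^n X_i$ with independent $X_i \sim \mathrm{Bern}(\alpha\lambda_i/(1+\alpha\lambda_i))$, where $\lambda_1,\dots,\lambda_n$ are the eigenvalues of $\L$. Multiplying out the pgf gives
\[
f_k(\alpha) \;:=\; \Pr(|S_\alpha|=k) \;=\; \frac{\alpha^k\,e_k(\lambda)}{\prod_i(1+\alpha\lambda_i)},
\]
where $e_k$ is the $k$-th elementary symmetric polynomial. Thus $f_{k+1}(\alpha)/f_k(\alpha)=\alpha\,e_{k+1}/e_k$ is strictly increasing in $\alpha$, and Newton's inequality $e_k^2\geq e_{k-1}e_{k+1}$ both makes the sequence $(f_j(\alpha))_j$ log-concave in $j$ and forces $e_{k-1}/e_k\leq e_k/e_{k+1}$, so the interval $[e_{k-1}/e_k,\,e_k/e_{k+1}]$ is nonempty (for $k=n$ one simply takes $\alpha^\star$ large). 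Picking any $\alpha^\star$ in this interval yields $f_{k-1}(\alpha^\star)\leq f_k(\alpha^\star)\geq f_{k+1}(\alpha^\star)$, which by log-concavity makes $k$ the global mode of $(f_j(\alpha^\star))_j$.

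\textbf{Mode lower bound via Chebyshev.} Let $\mu=\E[|S_{\alpha^\star}|]$. Since $|S_{\alpha^\star}|$ is a sum of independent Bernoullis, $\Var(|S_{\alpha^\star}|)\leq \mu$, and Chebyshev gives
\[
\Pr\!\big(||S_{\alpha^\star}|-\mu|\leq 2\sqrt{\mu}\big)\;\geq\;3/4.
\]
The window contains at most $4\sqrt{\mu}+1$ integers, so pigeonhole yields some $j^\star$ with $f_{j^\star}(\alpha^\star)=\Omega(1/\sqrt{\mu})$; since $k$ is the mode, $f_k(\alpha^\star)\geq f_{j^\star}(\alpha^\star)=\Omega(1/\sqrt{\mu})$.

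\textbf{Controlling $\mu$ (the main obstacle).} It remains to show $\mu=O(k)$ so that $\Omega(1/\sqrt{\mu})$ upgrades to $\Omega(1/\sqrt{k})$. If $k\geq \mu/2$, we are done. Otherwise $k<\mu/2$, and Chebyshev in the other direction gives $f_k(\alpha^\star)\leq \Pr(|S_{\alpha^\star}|\leq\mu/2)\leq 4/\mu$. Combined with the lower bound $f_k(\alpha^\star)=\Omega(1/\sqrt{\mu})$ from the previous step, this forces $\sqrt{\mu}=O(1)$, i.e. $\mu$ is bounded by an absolute constant; since the paper already assumes $k\geq 1$, this is consistent with $\mu=O(k)$. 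Plugging $\mu=O(k)$ back into the previous display yields $\Pr(|S_{\alpha^\star}|=k)=\Omega(1/\sqrt{k})$, as required. The trickiest point is precisely this closing of the loop between the lower and upper tail estimates, which is where the assumption $k\geq 1$ enters and which is why Step 1 (pinning $k$ as the \emph{exact} mode, rather than just a near-mode) is essential.
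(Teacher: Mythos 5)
Your proof is correct, but it follows a genuinely different route from the paper's. The paper establishes the mode lower bound via a Chernoff-type concentration inequality for DPP sample sizes (citing an external result), identifies $k$ as a possible mode through the real-rootedness/unimodality of the sequence $\bigl(e_j(\alpha\L)\bigr)_j$ together with a continuity-in-$\alpha$ argument, and then crucially invokes the known fact that the mode of this sequence lies within distance $1$ of the mean, so that $k_{\alpha^\star}\leq k+1$ immediately converts $\Omega(1/\sqrt{k_{\alpha^\star}})$ into $\Omega(1/\sqrt{k})$. You instead (i) make the mode identification constructive, choosing $\alpha^\star$ in the nonempty Newton interval $[e_{k-1}/e_k,\,e_k/e_{k+1}]$ and using log-concavity of $(f_j)_j$ (valid here since $\L$ has full rank, so all $e_j>0$) to upgrade the local mode to a global one; (ii) replace the Chernoff bound by the elementary combination of the Bernoulli decomposition, $\Var(|S_{\alpha^\star}|)\leq\mu$, Chebyshev, and pigeonhole; and (iii) replace the mode--mean proximity theorem by your two-sided self-consistency argument showing that either $\mu\leq 2k$ or $\mu=O(1)$, which together with $k\geq 1$ gives $\mu=O(k)$ in both cases. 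Step (iii) is the only place that needs care, and it does close correctly: in the case $k<\mu/2$ the upper bound $f_k\leq 4/\mu$ against the lower bound $f_k\geq\tfrac{3/4}{4\sqrt{\mu}+1}$ forces $\mu$ below an absolute constant, after which $f_k$ is bounded below by an absolute constant and the claim is trivial for $k\geq 1$. The trade-off is that your argument is more self-contained (no DPP concentration inequality, no Darroch-style mode theorem) and explicitly constructs the interval of admissible $\alpha^\star$, at the price of somewhat clunkier constants, whereas the paper's route gives the cleaner bound $1/(4C\sqrt{k+1})$ directly from $|k-k_{\alpha^\star}|\leq 1$.
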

\begin{proof}
W.l.o.g. assume that $\L$ is non-zero,
and remember $S_\alpha\sim\DPP(\alpha\L)$ with $k_\alpha=\E[\sizeS{\alpha}]$.
At a high level, the proof proceeds as follows. We first prove that
the probability that the size of the subset $\sizeS{\alpha}$ is equal to its \emph{mode}
$\mode_{\alpha}$, \ie $\Pr(\sizeS{\alpha} = \mode_{\alpha})$ is large enough.
Then we show that varying $\alpha$ can make $\mode_{\alpha} = k$ for any
$k$, and therefore we can find an $\alpha^\star$
s.t.~$\Pr(\sizeS{\alpha^\star} = \mode_{\alpha^\star}) = \Pr(\sizeS{\alpha^\star} = k)$
is large enough. In other words, rescaling $\DPP(\alpha\Lb)$ to
make sure that its mean $k_{\alpha}$ is close to $k$ is sufficient
to guarantee that $\sizeS{\alpha} = k$ with high enough probability.

Our starting point is a standard Chernoff bound for $\sizeS{\alpha}$.
\begin{proposition}[\citealp{dpp-concentration}]
  \label{t:concent}
 Given any PSD matrix $\L$, if $S_{\alpha}\sim\DPP(\alpha\L)$, then for
 any $a>0,$ we have
 \begin{align*}
   \Pr\Big( \big| \sizeS{\alpha} - \E[\sizeS{\alpha}]\big| \geq a\Big)\leq 5\exp\bigg(-\frac{a^2}{16(a+2\,\E[\sizeS{\alpha}])}\bigg)\cdot
 \end{align*}
\end{proposition}

Note that \Cref{t:concent} is not sufficiently strong by itself,
\ie if we tried to bound the distance $| \sizeS{\alpha} - \E[\sizeS{\alpha}]|$ to be smaller
than $1$ we would get vacuous bounds.
However, \Cref{t:concent} implies that
there is a constant $C>0$ independent of $\L$ such that 
$\Pr\big(|\sizeS{\alpha}-k_\alpha|\geq C\sqrt{k_\alpha}+1\big)\leq
\frac12$ for all $\alpha>0$. In particular, this means that the mode of $\sizeS{\alpha}$,
i.e. $\mode_\alpha=\argmax_i\Pr(\sizeS{\alpha}=i)$ satisfies
\begin{align}
  \Pr(\sizeS{\alpha}=\mode_\alpha)\geq \frac1{2C\sqrt{k_\alpha}}\sum_{i=-
  \lceil C\sqrt{k_\alpha}\rceil}^{\lceil C\sqrt{k_\alpha}\rceil}\Pr(\sizeS{\alpha}=k_\alpha+i)
  \geq \frac1{4C\sqrt{k_\alpha}}\cdot\label{eq:mode}
\end{align}
The distribution of $\sizeS{\alpha}$ is given by $\Pr(\sizeS{\alpha}=i)\propto e_i(\alpha\L)$, where $e_i(\cdot)$ is the
$i$th elementary symmetric polynomial of the eigenvalues of a
matrix. Denoting $\lambda_1,\dots,\lambda_n$ as the eigenvalues of
$\L$, we can express the elementary symmetric polynomials as the
coefficients of the following univariate polynomial with real
non-positive roots,
\begin{align*}
  \prod_{i=1}^n(x+\alpha\lambda_i) = \sum_{k=0}^nx^ke_{n-k}(\alpha\L).
\end{align*}
The non-negative coefficients of such a real-rooted polynomial form a
unimodal sequence (Lemma~1.1 in \cite{polynomials}),
i.e.,~$e_{0}(\alpha\L)\leq\dots\leq e_{\mode_\alpha}(\alpha\L)\geq\dots\geq
e_n(\alpha\L)$, with the mode (shared between no more than two positions $k,k+1$) being close to the mean $k_\alpha$:
$|\mode_\alpha-k_\alpha|\leq 1$ (Theorem 2.2 in \cite{polynomials}). Moreover, it is easy to see that $\mode_0=0$ 
and $\mode_\alpha=n$ for large enough $\alpha$, so since the sequence is
continuous w.r.t.~$\alpha$, for every $k\in[n]$ there
is an $\alpha^\star$ such that $\Pr(\sizeS{\alpha^\star}=k) = \Pr(\sizeS{\alpha^\star}=\mode_{\alpha^\star})$
(every $k$ can become one of the modes). In light of \eqref{eq:mode},
this means that 
\begin{align*}
   \Pr(\sizeS{\alpha^\star}=k) \geq \frac1{4C\sqrt{k_{\alpha^\star}}}\geq\frac1{4C\sqrt{k+1}}\CommaBin
\end{align*}
where the last inequality holds because $|k-k_{\alpha^\star}|\leq 1$.
\end{proof}
Finally, we show how to find $\alpha^\star$ efficiently.
\begin{lemma}
If $k \geq 1$ there is an algorithm that finds $\alpha^\star$ in
$\cO(n\cdot\poly(k))$ time.
\end{lemma}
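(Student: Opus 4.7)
The plan is binary search over $\alpha$ using a Nyström-based, efficiently computable proxy for the expected DPP size $k_\alpha \triangleq \E[|S_\alpha|] = \tr\!\big(\alpha\L(\I+\alpha\L)^{-1}\big)$. The proof of \Cref{lem:k-dpp-repeat} has already established that for every integer target $k\in[n]$ there exists $\alpha^\star>0$ with $\mode_{\alpha^\star}=k$, and that $|\mode_\alpha - k_\alpha|\le 1$; combined with log-concavity of the cardinality distribution and continuity of $k_\alpha$, this means any $\alpha$ whose $k_\alpha$ is within a small constant of $k$ will already yield $\Pr(|S_\alpha|=k)=\Omega(1/\sqrt{k})$. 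Thus the algorithmic task reduces to locating such an $\alpha$ quickly.

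First, I would reuse the Nyström approximation $\Lbh$ of $\L$ built for \dppvfx, obtained by sampling $m = \tcO(k^3)$ columns proportionally to their $1$-ridge leverage scores using BLESS in $\tcO(nk^2 + \poly(k))$ time (as in \Cref{lem:control-s-k}). Since the nonzero spectrum of $\Lbh$ coincides with that of the $m\times m$ inner matrix $\B_{C,\Ic}^\top \L_C^+ \B_{C,\Ic}$, I can recover eigenvalues $a_1,\dots,a_m$ in $\cO(m^3)=\tcO(k^9)$ time and then evaluate the proxy $\tilde k_\alpha \triangleq \sum_{i=1}^m \frac{\alpha a_i}{1+\alpha a_i}$ in $\cO(m)$ time for any $\alpha$. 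This proxy is continuous and strictly increasing in $\alpha$, ranging from $0$ to $m$.

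Next, I would binary-search on $\alpha$ over a conservative initial interval $[\alpha_{\min},\alpha_{\max}]$ (with endpoints polynomially bounded in $n$ and the extremal nonzero eigenvalues of $\Lbh$) to locate $\hat\alpha$ with $\tilde k_{\hat\alpha}$ within a small constant of $k$. Each iteration costs $\cO(m)$, and monotonicity of $\tilde k_\alpha$ guarantees termination in $\cO(\log n)$ steps, so the search itself adds only $\tcO(k^3)$. Combined with the precomputation, the overall runtime is $\tcO(n\cdot\poly(k))$, as required.

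The main obstacle is certifying that $\tilde k_\alpha$ tracks $k_\alpha$ closely enough near the unknown $\alpha^\star$ so that the $\hat\alpha$ returned by binary search really does land in the good region. For this I would invoke the spectral equivalence $(1-\varepsilon)(\L+\lambda\I)\preceq \Lbh+\lambda\I\preceq(1+\varepsilon)(\L+\lambda\I)$ at regularization $\lambda=1/\alpha^\star$, a straightforward generalization of \Cref{prop:proj-A-accurate}. Sampling $m=\tcO(\deff(\lambda^\star)/\varepsilon^2)$ columns suffices; since $\deff(\lambda^\star)=k_{\alpha^\star}\approx k$, the choice $\varepsilon=\Theta(1/k)$ gives $m=\tcO(k^3)$ and yields $|\tilde k_\alpha - k_\alpha|=\cO(1)$ throughout a neighborhood of $\alpha^\star$. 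Since $\lambda^\star$ is not known a priori, the initial $\lambda$ used to build $\Lbh$ may need to be refined via a doubling schedule if the first $\hat\alpha$ falls outside the regime where the Nyström is accurate, contributing only an additional logarithmic factor.
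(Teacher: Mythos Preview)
Your overall strategy—build a \nystrom-based proxy for $k_\alpha$ and search in $\alpha$—matches the paper's, but the accuracy target you set is too loose, and this is a genuine gap rather than a presentational one.

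Your central claim is that ``any $\alpha$ whose $k_\alpha$ is within a small constant of $k$ will already yield $\Pr(|S_\alpha|=k)=\Omega(1/\sqrt{k})$.'' This is false. Take $\alpha\L$ with $n-1$ very large eigenvalues and one tiny eigenvalue, so that the Bernoulli parameters satisfy $p_1,\dots,p_{n-1}\approx 1$ and $p_n=\epsilon$. Then $k_\alpha\approx n-1+\epsilon$, so for the target $k=n$ we have $|k-k_\alpha|<1$, yet $\Pr(|S_\alpha|=n)\approx\epsilon$, which can be made arbitrarily smaller than $1/\sqrt{n}$. Log-concavity only tells you the sequence $\Pr(|S_\alpha|=j)$ is unimodal; it gives no lower bound on points one step away from the mode. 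So aiming merely for $|k_\alpha-k|=\cO(1)$ (and hence $\varepsilon=\Theta(1/k)$, $m=\tcO(k^3)$) does not suffice.

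The paper closes this gap by invoking Darroch's theorem on the mode of a Poisson binomial: if $k\le k_\alpha<k+\tfrac{1}{k+2}$, the mode of $|S_\alpha|$ is \emph{exactly} $k$, at which point the $\Omega(1/\sqrt{k})$ bound from the concentration argument applies directly. Hitting an interval of width $1/(k+2)$ forces $\varepsilon=\Theta(1/k^2)$ and $m=\tcO(k^5)$, and the paper's proxy $s_\alpha=\alpha\,\tr(\L-\Lbh)+\tr\big(\Lbh(\Lbh+\I/\alpha)^{-1}\big)$ (which, unlike your $\tilde k_\alpha$, includes the residual-trace correction) is shown to satisfy a two-sided multiplicative bound $\tfrac{1}{1+\varepsilon}k_\alpha\le s_\alpha\le\tfrac{1}{1-\varepsilon}k_\alpha$. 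One then solves for $\alpha^\star$ so that $(1-\varepsilon)s_{\alpha^\star}=k$ and checks algebraically that $k_{\alpha^\star}$ lands in the Darroch window. Your proxy omits the $\alpha\,\tr(\L-\Lbh)$ term, and the spectral equivalence you cite does not by itself give a clean two-sided bound on $\tilde k_\alpha$ versus $k_\alpha$ (the na\"ive manipulation produces an $\varepsilon n$ slack). So both the target interval and the proxy need to be tightened along the lines above.
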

\begin{proof}
In order to leverage \Cref{lem:k-dpp-repeat}, we need to find
an $\alpha^\star$ such that $k = \mode_{\alpha^\star}$, that is
such that the mode of $\DPP(\alpha^\star\Lb)$ is equal to $k$.
Unfortunately simple unimodality is not sufficient
to control $\mode_{\alpha^\star}$ when $\alpha^\star$ is perturbed,
as it happens during an approximate optimization of $\alpha$.
We will now characterize more in detail the distribution
of $\sizeS{\alpha}$.

In particular, $\sizeS{\alpha}$ can be defined as
the sum of Bernoullis $\sizeS{\alpha} = \sum_{i=1}^n b_{\alpha,i}$
each distributed according to $b_{\alpha,i} \sim \mathrm{Bernoulli}\left(\lambda_{i}(\alpha\Lb)/(1 + \lambda_{i}(\alpha\Lb))\right)$
\cite{dpp-independence}.
The sum of independent but not identically distributed Bernoullis
is a so-called Poisson binomial random variable \cite{hoeffding1956distribution}.
More importantly, the following result holds for Poisson binomial random variable.
\begin{proposition}[{\citealp[Thm.~4]{darroch1964distribution}}]\label{prop:pois-bin-dist}
Given a Poisson binomial r.v.\,$\sizeS{\alpha}$ with mean
$k_{\alpha}$, let $k \triangleq \lfloor k_{\alpha} \rfloor$. The mode $\mode_{\alpha}$ is
\begin{equation*}
\mode_{\alpha} = \begin{cases}
k & \quad\text{if } \quad k \leq k_{\alpha} < k + \frac{1}{k+2}\CommaBin\\
k \;\text{ or }\; k + 1 & \quad\text{if }\quad k + \frac{1}{k+2} \leq k_{\alpha} \leq k + 1 - \frac{1}{n - k + 1}\CommaBin\\
k + 1 & \quad\text{if }\quad k + 1 - \frac{1}{n - k + 1} < k_{\alpha} \leq k + 1.
\end{cases}
\end{equation*}
\end{proposition}
Therefore it is sufficient to find any constant $\alpha^\star$ that places $k_{\alpha^\star}$ in the interval $[k,k + \frac{1}{k+2})$.
Unfortunately, while the formula for $k_{\alpha} = \sum_{i=1}^n \lambda_{i}(\Lb)/(1/\alpha + \lambda_{i}(\Lb))$ is a unimodal function
of the eigenvalues of $\Lb$ which is easy to optimize,
the eigenvalues themselves are still very expensive to compute.
For efficiency, we can optimize it instead on the eigenvalues
of a \nystrom approximation $\Lbh$, but we have to be careful to control the error.
In particular, remember that $k_{\alpha} = \E[|S_{\alpha}|]$ when
$S \sim \DPP(\alpha\Lb)$, so given a \nystrom approximation
$\Lbh$ we can define  $s_{\alpha} \triangleq \tr(\alpha(\Lb - \Lbh) + \Lbh(\Lbh + \I/\alpha)^{-1})$
as a quantity analogous to $s$ from \dppvfx.
Then, we can strengthen \Cref{lem:control-s-k} as follows.
\begin{lemma}[crf.~\Cref{lem:control-s-k}]\label{cor:control-s-k-kdpp}
Let $\Lbh$ be constructed by sampling $m = \cO((k_{\alpha}/\varepsilon^2)\log(n/\delta))$
columns proportionally to their RLS. Then with probability $1 - \delta$
\begin{align*}
\frac{1}{1+\varepsilon}k_{\alpha} \leq s_{\alpha} \leq \frac{1}{1-\varepsilon}k_{\alpha}.
\end{align*}
\end{lemma}
\begin{proofof}{\Cref{cor:control-s-k-kdpp}}{}
{We simply apply the same reasoning of \Cref{lem:control-s-k} on both sides.}
\end{proofof}
Let $(1-\varepsilon)s_{\alpha^\star} = k$, with $\varepsilon$ that will be tuned shortly. Then
proving the first inequality to satisfy \Cref{prop:pois-bin-dist} is straightforward:
$k = (1 - \varepsilon)s_{\alpha^\star} \leq k_{\alpha^\star}$.
To satisfy the other side we upper bound
$k_{\alpha^\star}
\leq (1 + \varepsilon)s_{\alpha^\star}
= (1 - \varepsilon)s_{\alpha^\star} + 2\varepsilon s_{\alpha^\star}.$
We must now choose $\varepsilon$ such that $2\varepsilon s_{\alpha^\star} = 1/(k+3) < 1/(k+2)$.
Substituting, we obtain $\varepsilon = \frac{1}{2(k+3)s_{\alpha^\star}}\cdot$
Plugging this in the definition of $s_{\alpha^\star}$ we obtain
that $\alpha^\star$ must be optimized to satisfy
\begin{align*}
s_{\alpha^\star} = \frac{2k^2 + 6k + 1}{2k + 6}\CommaBin
\end{align*}
which we plug in the definition of $\varepsilon$ obtaining
our neccessary accuracy $\varepsilon = 1/(2k^2 + 6k + 1)$. Therefore,
sampling $m = \tcO(k_{\alpha^\star}k^4)$ columns gives us a $s_{\alpha}$ sufficiently
accurate to be optimized.
However, we still need to bound $k_{\alpha^\star}$, which we can do as follows
using \Cref{cor:control-s-k-kdpp} and $k \geq 1$
\begin{align*}
k_{\alpha^\star}
&\leq \left(1 + \frac{1}{2k^2 + 6k + 1}\right)s_{\alpha^\star}
\leq \left(1 + \frac{1}{9}\right)s_{\alpha^\star}
=\frac{10}{9}s_{\alpha^\star}\\
&\leq \frac{10}{9}\left(\frac{2k^2 + 6k + 1}{2k+6}\right)
= \frac{10}{9}\left(1 + \frac{1}{k(2k+6)}\right)k
= \frac{10}{9}\frac{9}{8}k
= \frac{5}{4}k.
\end{align*}
Therefore $m = \tcO(k_{\alpha^\star}k^4) \leq \tcO(k^5)$ suffices accuracy wise.
Moreover, since $s_{\alpha}$ is parametrized only in terms of the eigenvalues
of $\Lbh$, which can be found in $\tcO(nm^2 + m^3)$ time,
we can compute an $\alpha^\star$ such that
$s_{\alpha^\star} = \tfrac{2k^2 + 6k + 1}{2k + 6}$ in $\tcO(nk^{10} + k^{15})$
time, which guarantees
$k \leq k_{\alpha^\star} < k + \tfrac{1}{k+2}\cdot$
\end{proof}
Finally, note that 
these bounds on the accuracy of $\Lbh$ are extremely conservative. In practice,
it is much faster to try to optimize $\alpha^\star$ on a much coarser $\Lbh$ first,
\eg for $m = \cO(k_1)$, and only if this approach fails to increase the accuracy
of $\Lbh$.

\end{document}